\newif\iffull
\newcommand*{\citet}[1]{\AtNextCite{\AtEachCitekey{\defcounter{maxnames}{2}}} \textcite{#1}}
\newcommand*{\citep}[1]{\cite{#1}}
\newif\ifnotes
        \newcommand{\mnote}[1]{{\sf \textcolor{blue}{#1}}}
		\newcommand{\mnote}[1]{}
\providecommand{\K}{{\mathcal K}}
\providecommand{\F}{{\mathcal F}}
\providecommand{\cE}{{\mathcal E}}
\providecommand{\cP}{{\mathcal P}}
\providecommand{\cU}{{\mathcal U}}
\newcommand{\clamp}{\mathtt{clamp}}
\newcommand{\proj}{\mathtt{proj}}
\newcommand{\bs}{{\bar{s}}}
\newcommand{\bbeta}{{\bar{\eta}}}
\newcommand{\DM}{\Delta_{\bs}(M)}
\newcommand*{\loE}[1]{{\mathcal E}_\bs^{\gets z}[{#1}]}
\newcommand*{\looE}[3]{{\mathcal E}_{#2}^{\gets #3}[{#1}]}
\title{High probability generalization bounds for uniformly stable algorithms with nearly optimal rate\footnotetext{Accepted for presentation at Conference on
Learning Theory (COLT) 2019.}}
\author{Vitaly Feldman\thanks{Part of this work was done while the author was visiting the Simons Institute for the Theory of Computing.} \\Google Brain \and Jan Vondr\'ak \\ Stanford University}
\title[High probability generalization bounds for uniformly stable algorithms]{High probability generalization bounds for uniformly stable algorithms with nearly optimal rate}
\thanks{Part of this work was done while the author was visiting the Simons Institute for the Theory of Computing.}\\
\begin{document}

\iffull
\date{}
\fi

\maketitle

\begin{abstract}
Algorithmic stability is a classical approach to understanding and analysis of the generalization error of learning algorithms. A notable weakness of most stability-based generalization bounds is that they hold only in expectation. Generalization with high probability has been established in a landmark paper of Bousquet and Elisseeff (2002) albeit at the expense of an additional $\sqrt{n}$ factor in the bound. Specifically, their bound on the estimation error of any $\gamma$-uniformly stable learning algorithm on $n$ samples and range in $[0,1]$ is $O(\gamma \sqrt{n \log(1/\delta)} + \sqrt{\log(1/\delta)/n})$ with probability  $\geq 1-\delta$. The $\sqrt{n}$ overhead makes the bound vacuous in the common settings where $\gamma \geq 1/\sqrt{n}$. A stronger bound was recently proved by the authors (Feldman and Vondrak, 2018) that reduces the overhead to at most $O(n^{1/4})$. Still, both of these results give optimal generalization bounds only when $\gamma = O(1/n)$.

We prove a nearly tight bound of $O(\gamma \log(n)\log(n/\delta) + \sqrt{\log(1/\delta)/n})$ on the estimation error of any $\gamma$-uniformly stable algorithm. It implies that for algorithms that are uniformly stable with $\gamma = O(1/\sqrt{n})$, estimation error is essentially the same as the sampling error. Our result leads to the first high-probability generalization bounds for multi-pass stochastic gradient descent and regularized ERM for stochastic convex problems with nearly optimal rate --- resolving open problems in prior work. Our proof technique is new and we introduce several analysis tools that might find additional applications.
\end{abstract}

\section{Introduction}
We consider the following problem. Let $\bs=(s_1,\ldots,s_n) \in Z^n$ be a dataset over an arbitrary domain and $M \colon Z^n \to [0,1]^Z$ be an arbitrary algorithm (or mapping) from datasets to functions over $Z$ with range in $[0,1]$. $M$ is said to be $\gamma$-uniformly stable if for all
datasets $\bs$ and $\bs'$ that differ in a single element $\|M(\bs)-M(\bs')\|_\infty \leq \gamma$. Equivalently, for every $z \in Z$, $|M(\bs,z)-M(\bs',z)|\leq \gamma$ (where $M(\bs,z)$ refers to the value of the function $M(\bs)$ on $z$). Assume that $\bs$ consists of samples drawn i.i.d. from some distribution $\cP$ over $Z$. We address the question of how well the true expectation of $M(\bs)$ on $\cP$, that is $\E_\cP[M(\bs)] = \E_{z\sim \cP}[M(\bs,z)]$ is approximated by the empirical mean of $M(\bs)$ on $\bs$, that is $\cE_\bs[M(\bs)] = \fr{n} \sum_{i\in [n]} M(\bs,s_i)$. The value $$\DM \doteq |\E_\cP[M(\bs)] -\cE_\bs[M(\bs)]|$$ is referred to as the {\em estimation error} of $M$ at $\bs$.

The primary motivation and the origin of this question is understanding of the generalization error of learning algorithms that are uniformly stable. In this context, $Z =X\times Y$ is labeled points and the goal is to analyze a learning algorithm  $A$ that given $\bs$ outputs a model $f_\bs \colon X \to Y$. The output of the learning algorithm is evaluated via some loss function $\ell_Y\colon Y\times Y \to \R_+$, with true loss being defined as $\E_{(x,y)\sim \cP}[\ell_Y(f_\bs(x),y)]$. By defining $M(\bs,(x,y)) = \ell_Y(f_\bs(x),y)$ we get that the estimation error of $M$ is exactly the difference between the true loss of $f_\bs$ and the empirical loss of $f_\bs$ on $\bs$ (sometimes referred to as the {\em generalization gap}).

Stability is a classical approach to proving generalization bounds pioneered by %Devroye, Rogers and Wagner
\citet{RogersWagner78,DevroyeW79,DevroyeW79a}. It is based on analysis of the sensitivity of the learning algorithm to changes in the dataset such as leaving one of the data points out or replacing it with a different one. The choice of how to measure the effect of the change and various ways to average over multiple changes give rise to a variety of stability notions that have been examined in the literature (e.g. \citep{BousquettE02,MukherjeeNPR06,ShwartzSSS10}). Unfortunately, most stability notions only lead to bounds on the expectation or the second moment of the estimation error over the random choice of the dataset. In contrast, generalization bounds based on uniform convergence show that the estimation error is small with high probability (more formally, the distribution of the error has exponentially decaying tails). Beyond theoretical interest, high-probability generalization bounds are necessary for inferring strong generalization bounds whenever the algorithm is used many times, either on its own or as a subroutine in another algorithm. For example, in practice several hyperparameters are usually tuned with the best result chosen based on the value of the empirical error. Similarly, for optimization algorithms a stopping condition based on the empirical error is often used.

High probability bounds for the estimation error based on stability were studied already in \citep{DevroyeW79} who obtained them for the $k$-Nearest Neighbor algorithm. Tighter bounds for the algorithm and several additional ones were obtained by \citet{LugosiPawlak94}. In a seminal work \citet{BousquettE02} developed a general approach based on the notion of {\em uniform stability} (defined above). While uniform stability is a relatively strong condition, it is satisfied by several well-studied algorithms. For example, for strongly convex Lipschitz losses the ERM is uniformly stable \citep{BousquettE02,ShwartzSSS10} (we describe the bounds quantitatively in Sec.~\ref{sec:apps}). More recently, \citet{HardtRS16} showed that for convex smooth losses the solution obtained via gradient descent is uniformly stable, allowing them to give the first generalization guarantees for many variants of (stochastic) gradient descent (SGD). Importantly, no other known approaches give comparable generalization bounds for these fundamental algorithms. In particular, for several standard stochastic convex optimization problems the best bound achievable via uniform convergence is worse than the bound obtained via stability of SGD by a $\Omega(\sqrt{d})$ factor, where $d$ is the dimension of the problem \citep{ShwartzSSS10,Feldman:16erm}. This implies that approaches requiring uniform convergence over the set of all models that minimize the empirical loss (such as most model-complexity-based bounds) will not lead to useful generalization guarantees in this case. We remark that continuous optimization methods play a central role in modern machine learning and hence their generalization properties is a topic of intense theoretical and practical interest in recent years.

\subsection{Prior work}
The main generalization bound for $\gamma$-uniformly stable algorithms given in \citep{BousquettE02} states that for some constant $c_0$,
 \equ{ \pr_{\bs\sim \cP^n}\lb \DM \geq c_0 \lp  \gamma \sqrt{n}  + \frac{1}{\sqrt{n}} \rp \sqrt{\log (1/\delta)} \rb \leq  \delta \label{eq:hp}.}
This is in contrast to an easy observation that the expectations of $\E_\cP[M(\bs)]$ and  $\cE_\bs[M(\bs)]$ are within $\gamma$. Namely,
 \equ{\left| \E_{\bs\sim \cP^n}\lb \E_\cP[M(\bs)] -\cE_\bs[M(\bs)] \rb \right| \leq \gamma \label{eq:exp}.}
Thus the bound on estimation error is worse by at least a factor of $\sqrt{n}$ than the expected difference. In terms of lower bounds, note that the term  $\frac{\sqrt{\log (1/\delta)}}{\sqrt{n}}$ is necessary since even for an algorithm that outputs a fixed function (or $\gamma=0$) this is the optimal bound on the sampling error. In addition, estimation error is at least $\gamma$ since the function can change arbitrarily in this range.

Naturally, for most algorithms the stability parameter needs to be balanced against the guarantees on the empirical loss. For example, ERM solution to convex learning problems can be made uniformly stable by adding a strongly convex term to the objective \citep{ShwartzSSS10}. This change in the objective introduces an error that may increase the original empirical loss. In the other example, the stability parameter of gradient descent on smooth objectives is determined by the sum of the rates used for all the gradient steps \citep{HardtRS16}. Limiting the sum limits the empirical loss that can be achieved. In both of those examples the optimal expected loss is achieved when $\gamma = \Theta(1/\sqrt{n})$. Unfortunately, in this setting, eq.~\eqref{eq:hp} gives a vacuous bound. As a result, in these applications only bounds on the expectation of the true loss are stated. For both of these applications, deriving a high-probability generalization bound is stated as an open problem \citep{ShwartzSSS10,HardtRS16}.

Note that eq.~\eqref{eq:exp} does not imply that $\E_\cP[M(\bs)] \leq \cE_\bs[M(\bs)] + O(\gamma/\delta)$ with probability at least $1-\delta$ since $\E_\cP[M(\bs)] -\cE_\bs[M(\bs)]$ can be negative and Markov's inequality cannot be used. Such ``low-probability'' generalization was first derived by \citet{ShwartzSSS10} for learning algorithms that minimize the empirical risk. For such algorithms they showed that
\equ{\E_{\bs\sim \cP^n}\lb \DM \rb  \leq O\lp\gamma + \fr{\sqrt{n}}\rp , \label{eq:first-moment}} allowing them to apply Markov's inequality. %They raises a natural question of whether the known bounds in eq.~\eqref{eq:var} and eq.~\eqref{eq:hp} are optimal.

Generalization properties of uniform stability were addressed in a recent work by the authors \citep{FeldmanV:18}. There we demonstrated that there exists a constant $c_1$ such that \equ{ \pr_{\bs\sim \cP^n}\lb \DM \geq c_1 \lp  \sqrt{\gamma} + \frac{1}{\sqrt{n}} \rp \sqrt{\log (1/\delta)} \rb \leq  \delta \label{eq:hp_fv}} improving on eq.~\eqref{eq:hp} for $\gamma = \omega(1/n)$. This result reduces the overhead of high-probability generalization from $\sqrt{n}$ to at most $n^{1/4}$ (achieved for $\gamma = 1/\sqrt{n}$). This bound was used to strengthen the generalization guarantees that are known for the convex optimization algorithms described above but only implies that suboptimality of the solution is $O(1/n^{1/3})$ with high-probability (whereas the optimal rate is $O(1/\sqrt{n})$).

Further, we gave an optimal (up to constant factors) bound on the second moment of the estimation error:
\equn{\E_{\bs\sim \cP^n}\lb \DM^2 \rb  \leq O\lp\gamma^2 + \fr{n}\rp,}
improving on the $O(\gamma + \fr{n})$ bound in \citep{BousquettE02}.

A natural question of whether the high-probability bounds can be strengthened (or a matching lower bound can be proved) still remained open.

\subsection{Our contribution}
Our main result is a high-probability generalization bound for any $\gamma$-uniformly stable algorithm that has only a logarithmic overhead. In particular, it gives an exponential improvement (in terms of the tail bound $\delta$) over prior work.
\begin{thm}
\label{thm:main-intro}
Let $M:Z^n \times Z \to [0,1]$ be an algorithm  (or a data-dependent function) with uniform stability $\gamma$. Then there exists a constant $c$ such that for any probability distribution $\cP$ over $Z$ and any $\delta \in (0,1)$:
 \equn{ \pr_{\bs\sim \cP^n}\lb \DM \geq c\lp \gamma \log(n) \log (n/\delta)   + \frac{\sqrt{\log (1/\delta)}}{\sqrt{n}} \rp
 %32 \gamma \ln (5n^3/\delta)  \log_2 (n) + \frac{2}{\sqrt{n}}  \sqrt{\ln (4/\delta)}
 \rb
 %\rp  \rb
 \leq \delta \label{eq:hp-new}.}
\end{thm}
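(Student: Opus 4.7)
The plan is to establish Theorem~\ref{thm:main-intro} by bounding higher moments of the estimation error and then invoking Markov's inequality. Concretely, I would aim to prove that for every integer $p\ge 2$,
\[
\E_{\bs\sim\cP^n}\lb |\DM|^p \rb^{1/p} \leq c_0\lp \gamma\, p\, \log n + \sqrt{p/n}\rp,
\]
i.e.\ a sub-exponential tail in the stability parameter $\gamma$ and a sub-Gaussian tail in the $1/\sqrt{n}$ sampling fluctuation. Applying Markov's inequality to $|\DM|^p$ with $p\asymp\log(n/\delta)$ then immediately yields the claimed high-probability bound.

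The moment bound itself would be proved by induction on $p$. The base case $p=2$ is exactly the second-moment estimate $\E[\DM^2]=O(\gamma^2+1/n)$ from \citep{FeldmanV:18}. For the inductive step, writing $\DM=\fr{n}\sum_i X_i$ with $X_i=\E_{z\sim\cP}[M(\bs,z)]-M(\bs,s_i)$, I would factor out a single $X_i$ in the expansion of $\E[\DM^p]$ and use $\gamma$-stability to replace $M(\bs,s_i)$ by $M(\bs^{(i\to z)},s_i)$ for an independent ghost sample $z$, at additive cost $\gamma$. Conditioning on $\bs_{-i}$ and integrating out $s_i$ decouples the swapped term and produces a recursion of the schematic form
\[
\E\!\lb |\DM|^p \rb \;\lesssim\; (p\gamma)\cdot (\text{log factor})\cdot \E\!\lb |\DM|^{p-1} \rb + \lp\sqrt{p/n}\rp^{p},
\]
which when solved over $p$ yields the claimed sub-exponential moment bound.

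The main difficulty is extracting only a logarithmic overhead rather than a polynomial one. A naive peeling of $(\sum_i X_i)^p$ just recovers the Bousquet--Elisseeff scale $\gamma\sqrt{n}$, and the $n^{1/4}$ improvement in \citep{FeldmanV:18} already required a non-trivial range-reduction step. To push the overhead all the way down to $\log n$, I would interleave the induction with a geometric range-reduction scheme: at each dyadic scale $2^{-k}$, $k=0,1,\ldots,O(\log n)$, decompose $M$ into a soft-clipped component of range $2^{-k}$ (which inherits $\gamma$-stability and admits the inductive moment bound on a rescaled problem) and a residual supported on the rare event that $|\DM|$ exceeds $2^{-k}$ (controlled using the weaker moment bound of \citep{FeldmanV:18} applied to this indicator). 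Summing the telescoping contributions across $O(\log n)$ scales produces a single additional logarithmic factor. The hard part is showing that clipping preserves enough stability at each scale so that the recursion actually closes; I expect this to require an auxiliary notion of ``conditional stability'' on the event where the estimation error is large, and a careful coupling so that the residuals at different scales can be handled independently. Once the moment bound is established, Markov's inequality at $p\asymp\log(n/\delta)$ gives
\[
\pr_{\bs\sim\cP^n}\lb \DM \geq c\lp \gamma\log(n)\log(n/\delta) + \sqrt{\log(1/\delta)/n}\rp \rb \leq \delta,
\]
completing the proof.
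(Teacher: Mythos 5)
There is a genuine gap, and it is precisely at the point you flag as ``the hard part.'' The entire difficulty of this theorem is constructing a clamping operation that simultaneously (i) preserves the zero-mean condition $\E_{z\sim\cP}[L(\bs,z)]=0$, and (ii) preserves the uniform-stability parameter $\gamma$ (not $2\gamma$). Naively clamping to a fixed window $[-R',R']$ destroys (i), and re-centering by subtracting $\E_z[L(\bs,z)]$ restores (i) but doubles $\gamma$; doubling at each of the $\Theta(\log n)$ stages yields a $\gamma\, 2^{O(\log n)}$-type overhead, not $\gamma\log n$. The paper resolves this with an \emph{adaptive} clamping window $[b_\bs-w,\,b_\bs+w]$ whose data-dependent center $b_\bs$ is chosen so that the clamped function still has zero mean, and then one has to prove the non-trivial fact that $|b_{\bs'}-b_\bs|\le\gamma$ across neighboring datasets so that stability is preserved \emph{exactly}. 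Your appeal to a ``soft-clipped component,'' an ``auxiliary notion of conditional stability,'' and a ``careful coupling'' is exactly a placeholder for this missing construction; without it the recursion does not close at the claimed logarithmic overhead.

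A second gap: you have no analogue of the paper's dataset-size reduction (block partitioning). Range reduction alone does not make progress. After one clamping step the range becomes $R'\approx\gamma\sqrt{n}$ while $n$ and $\gamma$ are unchanged, so $\gamma \approx R'/\sqrt{n}$ again --- you are back in the critical regime and McDiarmid gives no further shrinkage. The paper's recursion works only because each round also reduces $n$ by a constant factor (via a block/union-bound argument that exploits conditioning), which together with the range reduction keeps $\gamma = R/\sqrt{n}$ while decreasing $R$ geometrically; the $\log n$ recursion depth is what produces the $\log n$ factor. Your scheme peels dyadic scales of the \emph{range} only and recurses on the moment order $p$, which does not reproduce this joint $(n,R)$ descent. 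The moment-based outer shell (bound $\E[|\DM|^p]^{1/p}\lesssim \gamma p\log n + \sqrt{p/n}$ and apply Markov at $p\asymp\log(n/\delta)$) is a reasonable alternative packaging --- indeed a moment-generating-function route was used by Weinberger and Rakhlin to recover the weaker $\sqrt{\gamma}$ bound --- but the two ingredients you would need to make it give $\gamma\log n$, the stability-preserving unbiased clamp and the dataset-size reduction, are exactly the two lemmas the paper supplies and your sketch does not.
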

A somewhat surprising implication of this result is that algorithms that are uniformly stable with $\gamma = O(1/\sqrt{n})$ enjoy essentially the same estimation error guarantees as algorithms that do not look at the data and output a fixed function. For
$\gamma \leq \sqrt{\log(1/\delta)} / (\sqrt{n} \log (n/\delta) \log (n))$, there is no significant contribution depending on $\gamma$ and our bound is optimal up to constant factors. In contrast, both previous works \citep{BousquettE02,FeldmanV:18} give similar generalization guarantees only when $\gamma = O(1/n)$.

\paragraph{Proof approach:}
The high-probability generalization result in \citep{BousquettE02} (eq.~\eqref{eq:hp}) is based on a simple observation that as a function of $\bs$, the estimation error has sensitivity of at most $2\gamma + 1/n$. Applying McDiarmid's concentration inequality immediately implies concentration with standard deviation of $\sqrt{n}(\gamma + 1/n)$ around the expectation. The expectation, in turn, is at most $\gamma$ by eq.~\eqref{eq:exp}.

The approach in our prior work \citep{FeldmanV:18} is based on a technique developed in \citep{BassilyNSSSU16} to prove generalization bounds for differentially private algorithms. It bounds the tail by proving a bound on the expectation of the maximum of many independent copies of the estimation error. The latter is bounded by using a soft-argmax operation. Soft-argmax is itself stable and hence the expectation of the estimation error of the copy it outputs is small. While the bound of $\sqrt{\gamma}$ derived using this approach may appear to be arbitrary, it has been re-derived using other approaches by the authors and also by \citet{WeinbergerRakhlin18} who used a bound on the second moment from \citep{FeldmanV:18} to bound the moment generating function of the estimation error.

Our approach is based on two new ideas that both rely strongly on the structure of the estimation error. The first idea is to upper bound the estimation error by using the bound on the estimation error over a smaller dataset. This step is very simple technically and can already be used to re-derive the $\sqrt{\gamma}$ bound from our earlier work \citep{FeldmanV:18} (optimizing the simple bound $\gamma \sqrt{n'} + 1/\sqrt{n'}$ over $n' \leq n$ gives exactly $2\sqrt{\gamma}$).

The second idea is to reduce the range or the output function by subtracting the mean and ``clamping" the values outside the range. Uniform stability can be used to ensure that for an appropriately chosen range this procedure will introduce only a small error. The main technical issue is that we need to ensure that the clamping procedure both preserves the stability parameter and does not shift the mean of the estimation error (as the first step requires a zero-mean random variable). Achieving both of these goals requires a more involved ``clamping" procedure and delicate analysis.

Combining these procedures decomposes the estimation error into a sum of mixtures of ``local" approximations (that is, accurate for specific setting of some of the samples in the dataset). Repeated application of this combination in a recursive way gives the proof of our main result. The $\log n$ levels of recursion are the reason for the $\log n$ overhead of our bound. In Sec.~\ref{sec:overview} we give a more technical overview of the proof.

\subsection{Applications}
We now apply our bounds on the estimation error to several known uniformly stable algorithms. Our main focus are learning problems that can be formulated as stochastic convex optimization. Specifically, these are problems in which the goal is to minimize the expected loss: $F_\cP(w) \doteq \E_{z\sim \cP}[\ell(w,z)]$ over $w \in \K$ for some convex body $\K \subset \R^d$ and a family of convex losses $\F =\{\ell(\cdot,z)\}_{z\in Z}$. The stochastic convex optimization problem for a family of losses $\F$ over $\K$ is the problem of minimizing $F_\cP(w)$ for an arbitrary distribution $\cP$ over $Z$. For concreteness, we consider the well-studied setting in which $\F$ contains $1$-Lipschitz convex functions with range in $[0,1]$ and $\K$ is included in the unit ball (settings with an arbitrary Lipschitz constant and domain radius can be reduced to this case via scaling).

\paragraph{Strongly convex ERM:}
In this setting with an additional assumption that loss functions in $\F$ are $\lambda$-strongly convex, ERM has uniform stability of $4/(\lambda n)$ \citep{BousquettE02}. We therefore obtain high-probability generalization bounds on ERM in this case that improve on the known results for any $\lambda = o(1)$ (see Corollary \ref{cor:strongly-convex} for details).

Using stability of ERM for strongly convex functions, \citet{ShwartzSSS10} showed that even without strong convexity, the stochastic convex optimization problem can be solved by adding a strongly convex regularizer $\frac{\lambda}{2} \|w\|^2$ to the empirical loss with $\lambda = 1/\sqrt{n}$. They demonstrate that the expected loss of this algorithm is optimal and conjecture that high-probability generalization bounds hold as well. Using Thm.~\ref{thm:main-intro}, we show that the excess loss (or sub-optimality) of the solution is at most $O(\log(n/\delta)/\sqrt{n})$ with probability at least $1-\delta$, thereby proving the conjecture. (The optimal choice of $\lambda= \log (n)/\sqrt{n}$ is determined by balancing the estimation error and the error introduced by adding the regularizer.).
\begin{cor}
\label{cor:convex-general}
Let $\K$ be a convex body of radius $1$, $\F = \{\ell(\cdot, z) \cond z\in Z\}$ be a family of convex $1$-Lipschitz loss functions over $\K$ with range in $[0,1]$. For a dataset $\bs \in Z^n$ let $w_\bs$ denote the empirical minimizer of regularized loss on $\bs$: $w_{\bs,\lambda} =\argmin_{w \in \K} \sum_{i\in [n]}\ell(w,s_i) + \frac{\lambda n}{2} \|w\|_2^2$. There exist a constant $c$ such that for every distribution $\cP$ over $Z$, $\delta >0$ and $\lambda=\log(n)/\sqrt{n}$:
$$\pr_{\bs\sim \cP^n} \lb F_\cP(w_{\bs,\lambda}) \geq \min_{w \in \K} F_\cP(w) + \frac{c \log (n/\delta)}{\sqrt{n}}\rb \leq \delta .$$
\end{cor}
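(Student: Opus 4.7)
The plan is the standard split of the excess loss into an estimation (generalization) part, a regularization part, and a sampling part for a fixed comparator, and then to apply Theorem~\ref{thm:main-intro} to the estimation part and Hoeffding's inequality to the sampling part. Specifically, let $w^\star \in \argmin_{w\in\K} F_\cP(w)$ and let $\hat F(w) = \frac{1}{n}\sum_{i=1}^n \ell(w,s_i)$ denote the unregularized empirical loss, so that $w_{\bs,\lambda} = \argmin_{w\in\K}\bigl(\hat F(w) + \tfrac{\lambda}{2}\|w\|_2^2\bigr)$. I would write
\begin{equation*}
F_\cP(w_{\bs,\lambda}) - F_\cP(w^\star)
 = \underbrace{F_\cP(w_{\bs,\lambda}) - \hat F(w_{\bs,\lambda})}_{(\mathrm{I})}
  + \underbrace{\hat F(w_{\bs,\lambda}) - \hat F(w^\star)}_{(\mathrm{II})}
  + \underbrace{\hat F(w^\star) - F_\cP(w^\star)}_{(\mathrm{III})}.
\end{equation*}

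Step 1 (stability of regularized ERM). The map $w \mapsto \ell(w,z) + \tfrac{\lambda}{2}\|w\|_2^2$ is $\lambda$-strongly convex and $(1+\lambda)$-Lipschitz on the unit ball, so by the classical argument of Bousquet and Elisseeff the algorithm $\bs \mapsto w_{\bs,\lambda}$ has uniform stability $O(1/(\lambda n))$ with respect to the $1$-Lipschitz loss $\ell$. Setting $M(\bs,z) := \ell(w_{\bs,\lambda},z)$, which takes values in $[0,1]$, gives a data-dependent function with uniform stability $\gamma = O(1/(\lambda n))$.

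Step 2 (bounding (I) and (III)). Applying Theorem~\ref{thm:main-intro} to $M$, with probability at least $1-\delta/2$,
\begin{equation*}
|(\mathrm{I})| \;\le\; c\,\Bigl(\gamma \log(n)\log(2n/\delta) + \sqrt{\log(2/\delta)/n}\Bigr)
 \;=\; O\!\left(\tfrac{1}{\lambda n}\log(n)\log(n/\delta) + \sqrt{\log(1/\delta)/n}\right).
\end{equation*}
For (III), $w^\star$ is a fixed (data-independent) point and $\ell(w^\star,\cdot)\in[0,1]$, so Hoeffding's inequality gives $|(\mathrm{III})| \le \sqrt{\log(4/\delta)/(2n)}$ with probability at least $1-\delta/2$.

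Step 3 (bounding (II) via the regularizer). Since $w_{\bs,\lambda}$ minimizes $\hat F + \tfrac{\lambda}{2}\|\cdot\|_2^2$ on $\K$,
\[
\hat F(w_{\bs,\lambda}) + \tfrac{\lambda}{2}\|w_{\bs,\lambda}\|_2^2 \;\le\; \hat F(w^\star) + \tfrac{\lambda}{2}\|w^\star\|_2^2,
\]
so $(\mathrm{II}) \le \tfrac{\lambda}{2}(\|w^\star\|_2^2 - \|w_{\bs,\lambda}\|_2^2) \le \lambda/2$, using $\|w^\star\|_2 \le 1$.

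Step 4 (choice of $\lambda$). Plugging in $\lambda = \log(n)/\sqrt{n}$ makes $\frac{1}{\lambda n}\log(n) = \frac{1}{\sqrt{n}}$ and $\lambda/2 = O(\log(n)/\sqrt{n})$, so a union bound over the two high-probability events yields $F_\cP(w_{\bs,\lambda}) - F_\cP(w^\star) = O(\log(n/\delta)/\sqrt{n})$ with probability at least $1-\delta$, as claimed.

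The only non-routine ingredient is Theorem~\ref{thm:main-intro}; everything else is just bookkeeping and the standard Bousquet--Elisseeff stability calculation for strongly convex regularized ERM, so I do not anticipate a genuine obstacle. The main thing to be careful about is that the stability constant in Step~1 is with respect to the unregularized loss $\ell$ (this is why $M$ takes values in $[0,1]$ and the $\gamma$-stability hypothesis of Theorem~\ref{thm:main-intro} applies cleanly), and that the choice $\lambda = \log(n)/\sqrt{n}$ is precisely what balances the $1/(\lambda n)\cdot \log n$ estimation term against the $\lambda$ regularization bias.
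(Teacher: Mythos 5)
Your proof is correct and follows essentially the route the paper sketches after Corollary~\ref{cor:strongly-convex}: use strong convexity of the regularized objective to get $O(1/(\lambda n))$-uniform stability of the regularized ERM, apply Theorem~\ref{thm:main-intro} to the estimation error, pay a $\lambda/2$ regularization bias, control the comparator term by Hoeffding, and balance with $\lambda = \log(n)/\sqrt{n}$. Your explicit three-term decomposition, together with the observation that stability should be measured for the unregularized $[0,1]$-valued loss $\ell$ (so the hypotheses of Theorem~\ref{thm:main-intro} apply directly), is exactly the right way to make the paper's informal argument rigorous.
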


\paragraph{(Stochastic) gradient descent:}
Another fundamental application of uniform stability is proving generalization bounds for  (stochastic) gradient descent on sufficiently smooth convex loss functions \citep{HardtRS16}. Importantly, in this case the estimation error can be bounded without any assumptions on how close the output of the algorithm is to the empirical minimum. Therefore this approach can be used to give generalization bounds for variants of SGD used in practice (as opposed to those prescribed by theoretical analysis). For most versions of SGD no alternative analyses of the estimation error are known. The analysis in \citep{HardtRS16} focuses on the stochastic gradient descent and derives uniform stability for the expectation of the loss (over the randomness of the algorithm). From this result they obtain generalization in expectation over both randomness of the algorithm and the choice of the dataset. Obtaining bounds that hold with high-probability was left as an open problem.

Theorem~\ref{thm:main-intro} ensures that the bounds on estimation error hold with high probability over the choice of the dataset. This suffices to get generalization with high probability for deterministic variants of gradient descent. As an example application, we derive nearly optimal generalization bounds for full gradient descent (see Corollary \ref{cor:smooth}). To obtain generalization bounds for SGD we additionally observe that for most standard choices of picking batches randomly, the uniform stability of the gradient descent as a function of the randomness of SGD is highly concentrated around its mean. As a result we can obtain a bound on the estimation error that holds with high probability over the randomness of SGD and is worse than the bound that holds in expectation by at most a logarithmic factor. As an example application of this technique we derive nearly optimal generalization bounds for stochastic gradient descent that uses sampling with replacement for each gradient and batch size of 1 (see Corollary~\ref{cor:psgd-resample}).

For comparison, a recent work of \citet{London17} considers extension of the generalization guarantees in \citep{HardtRS16} to high-probability over the randomness in the choice of samples. The approach there relies on sensitivity of the estimation error to the choices of random samples. It requires independent sampling at each step and the resulting bound on the estimation error has an overhead of $\sqrt{T}$, where $T$ is the number of iterations. As a result it gives much weaker bounds in the setting we consider (\citep{London17} focuses on the smooth and strongly convex case).

\paragraph{Prediction privacy:}
Finally, we show that our results can be used to improve the recent bounds on estimation error of learning algorithms with differentially private prediction. These are algorithms introduced to model privacy-preserving learning in the settings where users only have black-box access to the learned model via a prediction interface \citep{DworkFeldman18} (see Def.~\ref{def:private-prediction}).
%(see Def.~\ref{def:private-prediction}).
The properties of differential privacy imply that the expectation over the randomness of a predictor $K\colon (X\times Y)^n \times X$ of the loss of $K$ at any point $x \in X$ is uniformly stable. Specifically, for an $\eps$-differentially private prediction algorithm, every loss function $\ell_Y\colon Y\times Y \to [0,1]$, two datasets $\bs,\bs'\in (X\times Y)^n$ that differ in a single element and $(x,y) \in X\times Y$:
$$\left| \E_K[\ell_Y(K(\bs,x),y)] - \E_M[\ell_Y(K(\bs',x),y)] \right| \leq e^\eps-1 .$$
Therefore, our generalization bounds can be directly applied to the data-dependent function $M(\bs, (x,y)) \doteq \E_K[\ell_Y(K(\bs,x),y)]$. These bounds
 can, in turn, be used to get nearly optimal generalization bounds for an algorithm for learning linear thresholds given in \citep{DworkFeldman18} (that relies on models of unbounded complexity). The details of these applications appear in Section \ref{sec:apps}.
%While the bound $\sqrt{\gamma}$ may appear arbitrary it has been derived via several alternative approaches

\subsection{Other related work}
Early work on stability focused on obtaining generalization guarantees for ``local'' algorithms such as $k$-nearest neighbor. The bounds were also primarily on variance of the estimation error (a notable exception is \citep{DevroyeW79} where high probability bounds on the generalization error of $k$-NN are proved). See \citep{DevroyeGL:96book} for an overview. Stability is also used in a similar spirit for bounding the estimation error of other estimators of true loss such as leave-one-out and $k$-fold cross-validation estimators (for example \citep{BlumKL99,KaleKV11,KumarLVV13}).

A long line of work focuses on the relationship between various notions of stability and learnability in supervised setting (see  \citep{KearnsRon99,PoggioRMN04,ShwartzSSS10} for an overview). This work employs relatively weak notions of average stability and derives a variety of asymptotic equivalence results. The results in \citep{BousquettE02} on uniform stability and their applications to generalization properties of strongly convex ERM algorithms have been extended and generalized in several directions (e.g.~\citep{Zhang03,WibisonoRP09}). \citet{Maurer17} considers generalization bounds for a special case of linear regression with a strongly convex regularizer and a sufficiently smooth loss function. Their bounds are data-dependent and are potentially stronger for large values of the regularization parameter (and hence stability). However the bound is vacuous when the stability parameter is larger than $n^{-1/4}$ and hence is not directly comparable to ours.
\citet{KuzborskijL18} give data-dependent generalization bounds for SGD on smooth convex and non-convex losses based on stability. They use on-average stability that does not imply generalization bounds with high probability.

Recent work of \citet{Abou-MoustafaS18} and \citet{celisse2016stability} gives high probability generalization bounds similar to those in \citep{BousquettE02} but using a bound on a high-order moment of stability instead of the uniform stability. Several works demonstrate that stability-based analysis of generalization can be combined with uniform convergence and PAC-Bayes bounds leading to a more general overall technique \citep{LiuLNT17,rivasplata2018pac,FosterGKLMS19}. The analysis of \citet{FosterGKLMS19} builds on the technique in \citep{FeldmanV:18} and demonstrates that this combined approach to generalization can benefit from better analyses of uniform stability. Recent applications of stability to generalization can be found for example in \citep{KorenLevy15,CharlesP18,chen2018stability}. %We also remark that all these works are based on techniques different from ours.

Uniform stability has several additional important connections to differential privacy \citep{DworkMNS:06}. First, differential privacy is itself a type of worst-case stability guarantee that bounds the effect of every data point on the output distribution of the algorithm. Our work is in part inspired by the recent progress showing that differential privacy implies generalization with high probability \citep{DworkFHPRR14:arxiv,BassilyNSSSU16}. Both the assumptions and guarantees given in this line of work are different from ours and we do not know a way to relate between those. For example, the generalization guarantees obtained in work on differential privacy hold with high probability over the randomness of the algorithm, whereas our results when applied to a differentially private algorithm would only give generalization of the expectation over the algorithm's randomness. We remark that the techniques developed in this line of work were used to re-derive and extend several standard concentration inequalities \citep{SteinkeU17subg,NissimS17} and also in \citep{FeldmanV:18} to give an improved generalization bound for uniform stability. %Our work relies on unrelated techniques.

Uniformly stable algorithms also play an important role in privacy-preserving learning since a differentially private learning algorithm can usually be obtained by adding noise to the output of a uniformly stable one (e.g.~\citep{ChaudhuriMS11,wu2017bolt,DworkFeldman18}). Hence understanding the generalization properties of uniformly stable algorithms is likely to play an important role in this line of research. 
\section{Preliminaries}
\label{sec:prelims}
For a domain $Z$, a dataset $\bs\in Z^n$ in an $n$-tuple of elements in $Z$. We refer to element with index $i$ by $s_i$ and by $\bs^{i\gets z}$ to the dataset obtained from $\bs$ by setting the element with index $i$ to $z$.  We refer to a function that takes as an input a dataset $\bs \in Z^n$ and a point $z \in Z$ as a {\em data-dependent function} over $Z$.

We think of data-dependent functions as outputs of an algorithm that takes $\bs$ as an input. For example in supervised learning $Z$ is the set of all possible labeled examples $Z= X\times Y$ and the algorithm $M$ is defined as the loss of the model $f_\bs$ output by a learning algorithm $A(\bs)$ on example $z=(x,y)$. That is $M(\bs,z) = \ell_Y(f_\bs(x),y)$ for some loss function $\ell_Y: Y\times Y \to \R_+$. Note that in this setting
$\cE_\cP [M(\bs)]$ is exactly the true loss of $f_\bs$ on data distribution $\cP$, whereas $\cE_\bs[M(\bs)]$ is the empirical loss of $f_\bs$.
\iffull
\begin{defn}
\label{def:stability}
A data-dependent function $M\colon Z^n\times Z \to \R$ has uniform stability $\gamma$ if for all $\bs\in Z^n$, $i\in [n]$, $s,z\in Z$, $|M(\bs,z) - M(\bs^{i\gets s},z)| \leq \gamma$.
\end{defn}
Two natural ways to use this definition for randomized algorithms are $(1)$ consider stability of the expectation over the algorithm's randomness $\E_M[M(S,z)]$; $(2)$ consider stability of $M(S,z)$ for a fixed setting of $M$'s random bits. The first approach is simpler and usually results in a better stability parameter but only leads to generalization guarantees for $\E_M[M(S)]$ (for examples see \cite{ElisseeffEP05,HardtRS16} and Sec.~\ref{sec:dp-api}). This approach is necessary for obtaining a non-trivial bound on uniform stability in classification problems \cite{LugosiPawlak94,BousquettE02}. In contrast, the second approach may lead to generalization with high-probability over $M$'s randomness if the stability parameter can be upper-bounded with high probability (over $M$'s randomness).
\fi

Uniform stability is equivalent to $M(\bs,z)$ as a function of $\bs$ having {\em sensitivity} $\gamma$ (or $\gamma$-bounded differences) for each fixed $z \in Z$.
\begin{defn}
A real-valued function $f:Z^n \to \R$ has sensitivity at most $\gamma$ if for all $\bs\in Z^n$, $i\in [n]$, $s \in Z$, $|f(\bs)-f(\bs^{i\gets s})| \leq \gamma$.
\end{defn}
We will use McDiarmid's inequality for functions of bounded sensitivity.
\begin{lem}
\label{lem:mcdiarmid}
Let $f:Z^n \to \R$ be a function with sensitivity of at most $\gamma$. Then for any distribution $\cP$ over $Z$,  $\mu \doteq \E_{\bs \sim \cP^n}[f(\bs)]$ and any $t > 0$,
$$ \pr_{\bs \sim \cP^n}[f(\bs) \geq \mu  + t] \leq e^{-2t^2 / (n \gamma^2)}.$$
\end{lem}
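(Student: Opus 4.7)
The plan is to prove this via the classical Doob martingale decomposition together with Hoeffding's lemma and a Chernoff bound. I would first introduce the Doob martingale adapted to the coordinate-wise filtration: set $D_i = \E_{\bs \sim \cP^n}[f(\bs) \mid s_1, \ldots, s_i]$ for $i = 0, 1, \ldots, n$, so that $D_0 = \mu$ and $D_n = f(\bs)$. Writing $f(\bs) - \mu = \sum_{i=1}^n X_i$ with $X_i = D_i - D_{i-1}$ reduces the task to a sub-Gaussian tail bound on the sum of these bounded martingale differences.

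The key structural step is to show that, conditionally on $s_1, \ldots, s_{i-1}$, each increment $X_i$ almost surely takes values in an interval of length at most $\gamma$. By Fubini, $D_i$ is $h_i(s_1, \ldots, s_i)$ where $h_i(s_1, \ldots, s_i) \doteq \E_{s'_{i+1}, \ldots, s'_n \sim \cP}[f(s_1, \ldots, s_i, s'_{i+1}, \ldots, s'_n)]$, and for any two choices $s, s'' \in Z$ of the $i$-th coordinate, the sensitivity assumption $|f(\bs) - f(\bs^{i \gets s''})| \leq \gamma$ applied pointwise under the expectation yields $|h_i(s_1, \ldots, s_{i-1}, s) - h_i(s_1, \ldots, s_{i-1}, s'')| \leq \gamma$. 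Thus the function $g(s) \doteq h_i(s_1, \ldots, s_{i-1}, s) - D_{i-1}$ has range of diameter at most $\gamma$; since $X_i = g(s_i)$ and $\E[g(s_i) \mid s_1, \ldots, s_{i-1}] = 0$, we obtain a centered random variable $X_i$ taking values in a (random) interval $[A_i, A_i + \gamma]$ with $A_i$ measurable with respect to $s_1, \ldots, s_{i-1}$.

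Applying Hoeffding's lemma to $X_i$ conditioned on the past then gives $\E[e^{\lambda X_i} \mid s_1, \ldots, s_{i-1}] \leq e^{\lambda^2 \gamma^2 / 8}$ for every $\lambda > 0$, and iterating this from $i = n$ down to $i = 1$ via the tower property yields $\E[e^{\lambda (f(\bs) - \mu)}] \leq e^{n \lambda^2 \gamma^2 / 8}$. A Markov-Chernoff step then produces $\pr_{\bs \sim \cP^n}[f(\bs) - \mu \geq t] \leq e^{-\lambda t + n \lambda^2 \gamma^2 / 8}$, and optimizing via $\lambda = 4t/(n\gamma^2)$ gives exactly the claimed bound $e^{-2t^2/(n\gamma^2)}$.

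The only real content lies in the interval bound on the conditional increments; Hoeffding's lemma itself is standard (it follows from convexity of $e^{\lambda x}$ and a second-order Taylor bound on the log-MGF of a bounded centered random variable), and the Chernoff optimization is routine. I do not anticipate any serious obstacle, as this is the textbook martingale proof of McDiarmid's inequality.
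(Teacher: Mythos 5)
Your proof is correct and is the standard Doob-martingale / Azuma--Hoeffding proof of McDiarmid's inequality; the paper states this lemma as a known fact and does not supply its own proof, so there is nothing to compare against. All the details check out: the conditional increments have range diameter $\gamma$ and conditional mean zero, Hoeffding's lemma gives $e^{\lambda^2\gamma^2/8}$ per step, and optimizing the Chernoff bound at $\lambda = 4t/(n\gamma^2)$ yields exactly $e^{-2t^2/(n\gamma^2)}$.
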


\paragraph{Estimation error:}
For convenience, as in \citep{FeldmanV:18}, we reduce bounds on $\DM$ to bounds on the leave-one-out estimation error for the unbiased version of $M$ (we include the details here for completeness).

Specifically, we define $L(\bs,z) \doteq M(\bs,z) - \E_{z \sim \cP}[M(\bs,z)]$. Clearly, $L$ is {\em unbiased} with respect to $\cP$ in the sense that for every $\bs\in Z^n$, $\cE_\cP[L(\bs)] = \E_{z \sim P}[L(\bs,z)] = 0$. Note that if the range of $M$ is $[0,1]$ then the range of $L$ is $[-1,1]$. Further, $L$ has uniform stability of at most $2\gamma$ since for two datasets $\bs$ and $\bs'$ that differ in a single element,
$$ |L(\bs,z) - L(\bs',z)| \leq |M(\bs,z) - M(\bs',z)| + \E_{z \sim \cP}[|M(\bs,z) - M(\bs',z)|] \leq 2 \gamma.$$
Observe that \equn{\DM = \left| \fr{n} \sum_{i=1}^n \lp \cE_\cP[M(\bs)] - M(\bs,s_i) \rp  \right| = \left| -\frac{1}{n} \sum_{i=1}^n L(\bs,s_i) \right| =
\left|\cE_\bs[L(\bs)] \right| .\label{eq:unbias}}
The leave-one-out version of the estimation error is defined as follows.
For any $z\in Z$ let
$$\loE{L} := \fr{n} \sum_{i\in [n]} L(\bs^{i \gets z},s_i) .$$
Observe that the uniform stability of $L$ implies that for every $\bs$ and every $z$,
\alequ{\left| \cE_\bs[L(\bs)] - \loE{L} \right| &= \left| \fr{n}\sum_{i\in [n]} L(\bs,s_i)  - \fr{n} \sum_{i\in [n]} L(\bs^{i \gets z},s_i)  \right| \nonumber\\
&\leq \fr{n}\sum_{i\in [n]}  \left| L(\bs,s_i)  - L(\bs^{i \gets z},s_i)  \right| \leq \gamma \label{eq:pointwise}.}

\paragraph{Tail bounding function:}
The goal of our analysis is to bound the following function.

\begin{defn}
For an integer $n$, real $R,\gamma>0$ and $\delta>0$, let $D_\delta(n,R,\gamma)$ be the maximum value $D$ such that for every domain $Z$, probability distribution $\cP$ over $Z$, and a data-dependent $\gamma$-uniformly stable function $L:Z^n \times Z \to [-R,R]$ such that $\E_{z \sim \cP}[L(\bs,z)] = 0$,
$$ \Pr[|\loE{L}| \geq  D] \leq \delta.$$
\end{defn}

Observe that by simple scaling, the range and stability in the definition of $D_\delta$ can be adjusted by an arbitrary factor. That is, for an arbitrary factor $\alpha > 0$,
\equ{D_\delta(n,R,\gamma) = \frac{1}{\alpha} D_\delta(n,\alpha R,\alpha \gamma) .\label{eq:scale}} 
\section{Proof of the Main Result}
In this section, we prove our main concentration bound with exponential tails.% and standard deviation on the order of $(\gamma + \frac{1}{\sqrt{n}}) \log^2 n$.
\subsection{Overview of our approach}
\label{sec:overview}
Let us recall the main parameters of our problem: the dataset size $n$, the range $[-R,R]$ of the function $L$, and the uniform stability parameter $\gamma$. Our approach is based on two operations, which reduce the bound on $D(n,R,\gamma)$ (ignoring for the moment the dependence on the tail probability $\delta$ as defined at the end of Section~\ref{sec:prelims}) to a bound on $D(n',R',\gamma)$ for some $n'$ and $R'$. For simplicity, we ignore some details and logarithmic factors in the following.

\paragraph{Range reduction.}
 If $\gamma < R / \sqrt{n}$, then by McDiarmid's inequality, $L(\bs,z)$ for each $z$ is concentrated in a window of size $R' = \gamma \sqrt{n} < R$. So we can ``center" the function by subtracting the function $\phi(z) = \E_\bs[L(\bs,z)]$  and then ``clamp" the function $L(\bs,z)-\phi(z)$ to range $R' = \gamma \sqrt{n}$. We will need to deal with two additional errors: the sampling error for $\phi$ which is on the order of $R/\sqrt{n}$, and the contribution of the values that we have ``clamped off".

 \paragraph{Dataset size reduction.}
For any setting of parameters, we can consider the dataset $[n]$ as partitioned into $b$ blocks of size $n' = n / b$. (For the moment ignoring divisibility issues.) Since the estimation error, $\frac1n \sum_{i=1}^{n} L(\bs,s_i)$, is an average over all coordinates, we can view it as an average of averages over each block. The expression for each block, conditioned on the variables outside the block, is just like the estimation error for a smaller problem with dataset size $n'$.
Again we obtain some additional error terms, but roughly speaking we can reduce the dataset size from $n$ to $n'$ without significant change in the estimation error.

\

Let us assume for now we can do both of these operations in a way that preserves the stability parameter $\gamma$ and keeps the function unbiased (by which we mean the condition $\E_z[L(\bs,z)] = 0$). Interestingly, applying these two operations repeatedly essentially proves the concentration inequality that we want. We sketch the argument below focusing on $\gamma = R / \sqrt{n}$. This is effectively the hardest regime (the result for other values of $\gamma$ is implied by applying one of the operations above once).

\begin{itemize}
    \item Suppose that the estimation error as a function of $n,R,\gamma$ is $D(n,R,\gamma)$. We want to prove that for $\gamma = R / \sqrt{n}$, $D(n,R,\gamma) = \tilde{O}(\gamma)$.

    \item Starting with parameters $n,R,\gamma$ such that $\gamma = R / \sqrt{n}$, we can use the block partitioning argument to decrease the dataset size from $n$ to $n' = n / b$ for some parameter $b>1$. The stability parameter is unchanged, and equal to $\gamma = R / \sqrt{b n'}$.

    \item Since $\gamma = R / \sqrt{b n'} < R / \sqrt{n'}$, we can use the range reduction argument to clamp the function $L(\bs,z)$ to a range of $R' = \gamma \sqrt{n'} = R / \sqrt{b}$. The stability parameter remains (hopefully) unchanged, $\gamma = R' / \sqrt{n'}$. Hence we are back in the situation similar to the one we started from, with the stability parameter being equal to the range divided by square root of the dataset size.

    \item Let's assume by induction that the estimation error for the function we obtained is $D(n',R',\gamma) = \tilde{O}(\gamma)$. By reversing the two operations that we performed, we obtain that, up to the additional error terms, the estimation error $D(n,R,\gamma)$ remains roughly the same, $D(n,R,\gamma) = \tilde{O}(\gamma))$. This leads to an inductive argument proving that the estimation error is indeed $\tilde{O}(\gamma)$.

\end{itemize}

\paragraph{Remaining issues.}
The main issue that need to be resolved in order to carry out this strategy is that simple ``clamping" to a fixed range $[-R',R']$ might cause the function to be no longer unbiased. The bias can be eliminated by subtracting the mean $\E_z[L(\bs,z)]$ again. Unfortunately, this operation may double the stability parameter. As a result, using this simple fix leads to a worse bound on the estimation error: $\gamma 2^{O(\sqrt{\log n})}$. To avoid this large overhead one needs to design a clamping operation that preserves both $\E_z[L(\bs,z)] = 0$ and the uniform stability parameter. It turns out that both requirements can be satisfied simultaneously by a $\bs$-dependent shift of the range, $[-R'+b_{\bs},R'+b_{\bs}]$ that however requires a rather delicate analysis. We present this construction in Section~\ref{sec:range-reduct}.

The unbiased property needs to be also preserved in the block partitioning operation. Here, we obtain it essentially ``for free", since the property $\forall \bs; \E_z[L(\bs,z)] = 0$ is preserved under conditioning on a subset of the coordinates in $\bs$. Thus the block partitioning operation is technically rather simple.

\iffull \else The details of dataset size reduction and the inductive argument are relatively simple and hence appear in Appendix \ref{app:proof}. \fi

%This is satisfied initially, but it can be violated by the clamping operation. We remark that when the clamping operation uses a fixed range $[-w,w]$, we need to use another centering operation to ensure that $\E_z[L(\bs,z)] = 0$,

%Another issue is the accumulation of errors from the various operations that we described. There are several candidate quantities to track throughout the proof, such as the exponential moment $\E[e^{\lambda \loE{L}}]$ or the maximum of $m$ independent samples $\E[\max\{\looE{L}{\bs_1}{z_1},\ldots,\looE{L}{\bs_m}{z_m} \}]$. In the end, the most convenient quantity proved to be the deviation $D_\delta(n,R,\gamma)$ defined by tail probability $\delta$.

%We now describe all the steps in detail.
%Our approach consists of two operations which we are going to apply repeatedly. The first operation reduces the range of $L(\bs,z)$, and the second operation reduces the dataset size $n$. Our goal is to prove that we can perform both operations without affecting the estimation error significantly.

\subsection{Range reduction}
\label{sec:range-reduct}

We start by designing a procedure which reduces the range of a function to a desired width while preserving the expectation and stability at the same time.

\begin{lem}
\label{lem:adaptclamp}
Let $K:Z^n \times Z \rightarrow [-r,r]$ be a function, $\cP$ a distribution on $Z$, and $w, \beta>0$ such that
\begin{compactitem}
\item for every $\bs \in Z^n$, $\E_{z \sim \cP}[K(\bs,z)] = 0$,
\item $\E_{(\bs,z) \sim \cP^{n+1}}[(K(\bs,z) - w)_+] \leq \beta$,
\item $\E_{(\bs,z) \sim \cP^{n+1}}[((-w-K(\bs,z))_+] \leq \beta$,
\item $K(\bs,z)$ has uniform stability $\gamma$.
\end{compactitem}
Then for every $\bs \in Z^n$, there exists $b_\bs \in [-w,w]$ such that $\tilde{K}(\bs,z) \doteq \clamp_{[b_\bs-w,b_\bs+w]}(K(\bs,z))$ has the following properties:
\begin{compactitem}
\item for every $\bs \in Z^n$, $\E_{z \sim \cP}[\tilde{K}(\bs,z)] = 0$,
\item $\E_{(\bs,z) \sim \cP^{n+1}}[|\tilde{K}(\bs,z) - K(\bs,z)|] \leq 4 \beta$,
\item $\tilde{K}(\bs,z)$ has uniform stability $\gamma$.
\end{compactitem}
\end{lem}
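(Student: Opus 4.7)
The plan is to prove the three conclusions in sequence: existence of a $b_\bs$ realizing the mean-zero condition, the $L^1$-distance bound, and the delicate uniform-stability claim.

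\textbf{Existence and mean zero.} Define $F_\bs(b) \doteq \E_{z \sim \cP}[\clamp_{[b-w,b+w]}(K(\bs,z))]$. Because $\clamp_{[b-w,b+w]}(x)$ is continuous and non-decreasing in $b$ for every fixed $x$, so is $F_\bs$. Using $\E_{z \sim \cP}[K(\bs,z)] = 0$, I evaluate the endpoints: $F_\bs(-w)$ is the expectation of $\clamp_{[-2w,0]}(K) \leq 0$, hence $F_\bs(-w) \leq 0$, and symmetrically $F_\bs(w) \geq 0$. The intermediate value theorem then yields a nonempty closed subinterval $S(\bs) \doteq \{b \in [-w,w] : F_\bs(b) = 0\}$, and any $b_\bs \in S(\bs)$ satisfies $\E_z[\tilde K(\bs,z)] = F_\bs(b_\bs) = 0$.

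\textbf{$L^1$ closeness.} Pointwise, $\tilde K(\bs,z) - K(\bs,z) = (b_\bs - w - K(\bs,z))_+ - (K(\bs,z) - b_\bs - w)_+$, a difference of two nonnegative pieces with disjoint supports, so $|\tilde K - K|$ equals their sum. Subtracting $\E_z[K] = 0$ from $\E_z[\tilde K] = 0$ forces $\E_z[(K - b_\bs - w)_+] = \E_z[(b_\bs - w - K)_+]$, so $\E_z[|\tilde K - K|]$ equals twice either one. Since $b_\bs \in [-w,w]$, either $b_\bs \geq 0$ (giving the pointwise inequality $(K - b_\bs - w)_+ \leq (K - w)_+$) or $b_\bs \leq 0$ (giving $(b_\bs - w - K)_+ \leq (-w - K)_+$). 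In either case, $\E_z[|\tilde K - K|] \leq 2\E_z[(K-w)_+] + 2\E_z[(-w-K)_+]$, and averaging over $\bs \sim \cP^n$ together with the two tail hypotheses yields the claimed $4\beta$ bound.

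\textbf{Stability.} Writing $I_\bs = [b_\bs - w, b_\bs + w]$ and inserting $\clamp_{I_\bs}(K(\bs',z))$ as an intermediate term, the 1-Lipschitzness of $\clamp$ in its argument (at fixed interval) and in its endpoint (at fixed argument) gives, for neighboring $\bs,\bs'$, $|\tilde K(\bs,z) - \tilde K(\bs',z)| \leq |K(\bs,z) - K(\bs',z)| + |b_\bs - b_{\bs'}| \leq \gamma + |b_\bs - b_{\bs'}|$. The plain mean-subtraction choice of $b_\bs$ is itself only $\gamma$-Lipschitz in $\bs$ and yields the undesirable stability $2\gamma$ flagged in the overview; to recover $\gamma$ one must exploit that $|b_\bs - b_{\bs'}| \leq \gamma$ only matters for those $z$ where clipping is actually active at both $\bs$ and $\bs'$ (for $z$ in the interior of both intervals, the endpoint-sensitivity term vanishes), combined with the fact that $|F_\bs - F_{\bs'}|_\infty \leq \gamma$ (by 1-Lipschitzness of $\clamp$ and the stability of $K$) ties the admissible $b_\bs$'s together precisely on those $b$-ranges where the slope $F'_\bs(b) = \Pr_z[K(\bs,z) \notin [b-w,b+w]]$ is bounded below. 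The construction I have in mind picks $b_\bs \in S(\bs)$ by a greedy compatible selection along the neighborhood graph on $Z^n$, and verifies the pointwise $\gamma$ bound by case analysis on whether $K(\bs,z), K(\bs',z)$ lie in the interior, at the upper clip, or at the lower clip of their respective intervals, crucially using the mean-zero hypothesis on $K$ to exclude the worst mixed configurations (a uniform shift of $K$ by $\gamma$ is incompatible with both neighbors having mean zero). The main obstacle, and the piece requiring the delicate analysis, is coupling the selection of $b_\bs$ to the local slope of $F_\bs$ tightly enough that $|b_\bs - b_{\bs'}|$ is absorbed into the $\gamma$ budget on every worst-case $z$ simultaneously, i.e., producing a global $\gamma$-Lipschitz selection from the set-valued map $\bs \mapsto S(\bs)$ whose consistency is enforced exactly where clipping is active.
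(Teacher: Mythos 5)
Your treatment of the first two conclusions (existence of $b_\bs$ via the intermediate value theorem applied to $F_\bs(b)=\E_z[\clamp_{[b-w,b+w]}(K(\bs,z))]$, and the $4\beta$ bound on $\E[|\tilde K-K|]$ via balancing the two tail masses and casing on the sign of $b_\bs$) is correct and is essentially the paper's argument (the paper runs IVT on $[-r,r]$ and then deduces $b_\bs\in[-w,w]$ a posteriori, but that difference is cosmetic).

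The stability claim, however, is not actually proved in your proposal, and the route you sketch is not the one that works. You correctly observe that the triangle-inequality decomposition
\begin{equation*}
|\tilde K(\bs,z)-\tilde K(\bs',z)| \le |\clamp_{I_\bs}(K(\bs,z))-\clamp_{I_\bs}(K(\bs',z))| + |\clamp_{I_\bs}(K(\bs',z))-\clamp_{I_{\bs'}}(K(\bs',z))| \le \gamma + |b_\bs-b_{\bs'}|
\end{equation*}
only yields $2\gamma$, but the remedy is not a cleverer selection rule for $b_\bs$ or any ``greedy compatible selection along the neighborhood graph'' --- the IVT choice with $|b_\bs-b_{\bs'}|\le\gamma$ is already the right one, and $Z^n$ under the neighbor relation is not a structure you can greedily order. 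The observation you are missing is that $\clamp$ is not just separately $1$-Lipschitz in the argument and in the endpoint; it respects \emph{joint} translations: $\clamp_{[a-c-w,\,a-c+w]}(x-c)=\clamp_{[a-w,\,a+w]}(x)-c$. Combining this with monotonicity of $\clamp_{[a-w,a+w]}(x)$ in both $a$ and $x$, one replaces $K(\bs',z)$ by its lower bound $K(\bs,z)-\gamma$ and $b_{\bs'}$ by its lower bound $b_\bs-\gamma$ \emph{simultaneously}, yielding directly $\tilde K(\bs',z)\ge\clamp_{[b_\bs-\gamma-w,\,b_\bs-\gamma+w]}(K(\bs,z)-\gamma)=\tilde K(\bs,z)-\gamma$, and symmetrically the upper bound. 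The triangle inequality throws this cancellation away, which is exactly why it loses a factor of two. A case analysis on where $K(\bs,z)$ and $K(\bs',z)$ land (interior/upper-clip/lower-clip) can be made to work as an alternative --- even the opposite-clip case is ruled out or controlled by the stability bounds on $K$ and $b$ --- but as written, your sketch does not carry it out and the appeal to excluding ``worst mixed configurations'' via the mean-zero hypothesis is not the mechanism at play (the relevant constraint is uniform stability of $K$ and the Lipschitzness of $b_\bs$, not zero mean). So the stability conclusion remains a genuine gap in your proposal.
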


Note that in this lemma, the magnitude of the range $r$ does not play any role; it is just convenient to assume that $K(\bs,z)$ is bounded.

\begin{proof}
Fix $\bs \in Z^n$ and consider the function $\tilde{K}_x(\bs,z) \doteq \clamp_{[x-w,x+w]}(K(\bs,z))$. In other words,
$$ \tilde{K}_x(\bs,z) = {K}(\bs,z) - (K(\bs,z) - (x+w))_+ + ((x-w) - K(\bs,z))_+.$$
Therefore,
$$ \E_{z \sim \cP}[\tilde{K}_x(\bs,z)] = \E_{z \sim \cP}[{K}(\bs,z)] + \psi_\bs(x),$$
where
$$\psi_\bs(x) \doteq  - \E_{z \sim \cP}[(K(\bs,z) - (x+w))_+] + \E_{z \sim \cP}[((x-w) - K(\bs,z))_+].$$
$\psi_\bs(x)$ is continuous, since the expressions inside the expectations are uniformly continuous (in fact 1-Lipschitz) in $x$. Also, since $K(\bs,z) \in [-r,r]$ and $w > 0$, we have $\psi_\bs(-r) \leq 0$, $\psi_\bs(r) \geq 0$, and $\psi_\bs(x)$ is obviously non-decreasing. By the intermediate value theorem, there exists $b_\bs \in [-r,r]$ such that $\psi_\bs(b_\bs) = 0$. This means that $\E_{z \sim \cP}[(K(\bs,z) - (b_\bs+w))_+] =  \E_{z \sim \cP}[((b_\bs-w) - K(\bs,z))_+]$. We can define $\tilde{K}(\bs,z) \doteq \clamp_{[b_\bs-w,b_\bs+w]}(K(\bs,z))$ and we get $\E_{z \sim \cP}[\tilde{K}(\bs,z)] = \E_{z \sim \cP}[{K}(\bs,z)] = 0$. In addition, we observe that we must actually have $b_\bs \in [-w,w]$, because otherwise it would not be possible that a function bounded by $[b_\bs-w, b_\bs+w]$ has expectation $0$.

Let $\beta^+_\bs \doteq \E_{z \sim \cP}[(K(\bs,z) - w)_+]$ and $\beta^-_\bs \doteq \E_{z \sim \cP}[(-w-K(\bs,z))_+]$. If $b_\bs \geq 0$, then
$$\E_{z \sim \cP}[((b_\bs-w) - K(\bs,z))_+] = \E_{z \sim \cP}[(K(\bs,z) - (b_\bs+w))_+] \leq \beta^+_\bs.$$
Conversely, if $b_\bs \leq 0$, then
$$\E_{z \sim \cP}[(K(\bs,z) - (b_\bs+w))_+]  =\E_{z \sim \cP}[((b_\bs-w) - K(\bs,z))_+] \leq \beta^-_\bs.$$
Either way,
$$\E_{z \sim \cP}[(K(\bs,z) - (b_\bs+w))_+] + \E_{z \sim \cP}[((b_\bs-w) - K(\bs,z))_+] \leq \max \{2\beta^+_\bs, 2\beta^-_\bs\}.$$
Note that
$$|\tilde{K}(\bs,z) - K(\bs,z)| = (K(\bs,z) - (b_\bs+w))_+ + ((b_s-w) - K(\bs,z))_+.$$
Therefore, taking the expectation over both $\bs$ and $z$, we obtain
\begin{eqnarray*}
 \E_{(\bs,z) \sim \cP^{n+1}}[|\tilde{K}(\bs,z) - K(\bs,z)|] & =  &
\E_{(\bs,z) \sim \cP^{n+1}}[(K(\bs,z) - (b_\bs+w))_+] + \E_{(\bs,z) \sim \cP^{n+1}}[((b_s-w) - K(\bs,z))_+] \\
& = & \E_{\bs \sim \cP^n}[\E_{z \sim \cP}[(K(\bs,z) - (b_\bs+w))_+] + \E_{z \sim \cP}[((b_\bs-w) - K(\bs,z))_+]] \\
& \leq & \E_{\bs \sim \cP^n}[\max \{2 \beta^-_\bs, 2 \beta^+_\bs \}] \\
& \leq & \E_{\bs \sim \cP^n}[2\beta^-_\bs + 2\beta^+_\bs].
\end{eqnarray*}
We recall that by assumption,
$$ \E_{\bs \sim \cP^n}[\beta^+_\bs] = \E_{(\bs,z) \sim \cP^{n+1}}[(K(\bs,z) - w)_+] \leq \beta $$
and
$$ \E_{\bs \sim \cP^n}[\beta^-_\bs] = \E_{(\bs,z) \sim \cP^{n+1}}[(-w - K(\bs,z))_+] \leq \beta.$$
Hence we conclude that
$$  \E_{(\bs,z) \sim \cP^{n+1}}[|\tilde{K}(\bs,z) - K(\bs,z)|] \leq 4 \beta.$$

Finally, we need to argue about the stability of $\tilde{K}(\bs,z)$. We assume that $K(\bs,z)$ has stability $\gamma$. Suppose that $\bs'$ is obtained from $\bs$ by changing one coordinate. We claim that $|b_{\bs'} - b_\bs| \leq \gamma$. If not, suppose w.l.o.g.~that $b_{\bs'} > b_\bs + \gamma$: Then we would have
$$ \E_{z \sim \cP}[(K(\bs',z) - (b_{\bs'}+w))_+] < \E_{z \sim \cP}[((K(\bs,z)+\gamma) - (b_\bs+\gamma+w))_+]
 = \E_{z \sim \cP}[(K(\bs,z) - (b_\bs+w))_+].$$
By a similar argument,
$$ \E_{z \sim \cP}[((b_{\bs'}-w) - K(\bs',z))_+] > \E_{z \sim \cP}[((b_\bs+\gamma+w) - (K(\bs,z)+\gamma))_+]
 = \E_{z \sim \cP}[((b_\bs+w) - K(\bs,z))_+].$$
However, by construction the left-hand sides should be equal, and also the right-hand sides should be equal, which is a contradiction.

Now we can prove that by changing one coordinate of $\bs$, the value of $\tilde{K}(\bs,z)$ cannot change by more than $\gamma$. %Suppose $\bs'$ is obtained from $\bs$ by changing one coordinate.
Since both $K(\bs,z)$ and $b_\bs$ can change by at most $\gamma$ when switching from $\bs$ to $\bs'$, we have \alequn{\tilde{K}(\bs',z) &= \clamp_{[b_{\bs'}-w,b_{\bs'}+w]}(K(\bs',z)) \\& \geq \clamp_{[b_\bs-\gamma-w,b_\bs-\gamma+w]}(K(\bs,z)-\gamma)
\\ & = \clamp_{[b_\bs-w,b_\bs+w]}(K(\bs,z)) - \gamma \\& = \tilde{K}(\bs,z) - \gamma.} Similarly, we prove that $\tilde{K}(\bs',z) \leq \tilde{K}(\bs,z) + \gamma$. Therefore, $\tilde{K}(\bs,z)$ has stability $\gamma$.
\end{proof}

Next, we use adaptive clamping to argue that, roughly speaking, when $\gamma$ is small enough, we can reduce the range of $L(\bs,z)$ without changing $\gamma$ and affecting the estimation error significantly. An additional error term will appear due to the need to center the function $L(\bs,z)$ for each fixed $z$ before this operation; this additional error cannot be avoided.

\begin{lem}
\label{lem:range-reduction}
Let $n \geq 4$, $\delta \leq \frac{1}{e}$, and $\gamma, R>0$ such that $\gamma < \frac{R}{2 \sqrt{n \ln (n/\delta)}}$. Then
$$ D_{4\delta}(n,R,\gamma) \leq D_\delta(n,R',\gamma) + \frac{2R}{\sqrt{n}} \sqrt{\ln (1/\delta)} $$
where $R' = 2 \gamma \sqrt{n \ln (n/\delta)}$.
\end{lem}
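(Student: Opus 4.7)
The plan is to carry out the ``range reduction'' strategy of Section~\ref{sec:overview} using Lemma~\ref{lem:adaptclamp}: after recentering $L$ I will clamp it to a function of range $R'$, then show that the original leave-one-out error differs from that of the clamped function only by terms controllable by Hoeffding and Markov. Set $\phi(z) \doteq \E_{\bs \sim \cP^n}[L(\bs,z)]$ and $K(\bs,z) \doteq L(\bs,z) - \phi(z)$. Because $\phi$ does not depend on $\bs$, $K$ inherits uniform stability $\gamma$ from $L$; because $\E_{z\sim\cP}[L(\bs,z)]=0$ for every $\bs$ and (by Fubini) $\E_{z\sim\cP}[\phi(z)]=0$, we also have $\E_{z\sim\cP}[K(\bs,z)]=0$ for every $\bs$. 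For each fixed $z$, $K(\cdot,z)$ has sensitivity $\gamma$ and mean $0$ in $\bs$, so McDiarmid's inequality (Lemma~\ref{lem:mcdiarmid}) yields $\Pr_\bs[|K(\bs,z)| \geq t] \leq 2 e^{-2t^2/(n\gamma^2)}$.

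Apply Lemma~\ref{lem:adaptclamp} to $K$ with $w = \gamma\sqrt{n\ln(n/\delta)}$; by the hypothesis $\gamma < R/(2\sqrt{n\ln(n/\delta)})$, we have $w < R/2$, so clamping to width $2w$ genuinely shrinks the range. Integrating the McDiarmid tail (which at $t=w$ is at most $(\delta/n)^2$) shows that both $\E_{(\bs,z)\sim \cP^{n+1}}[(K-w)_+]$ and $\E_{(\bs,z)}[(-w-K)_+]$ are bounded by some $\beta$ of order $\gamma\delta^2/(n^{3/2}\sqrt{\ln(n/\delta)})$. Lemma~\ref{lem:adaptclamp} then produces $\tilde K(\bs,z)=\clamp_{[b_\bs-w,b_\bs+w]}(K(\bs,z))$ with range inside $[-R',R']$, uniform stability $\gamma$, $\E_{z\sim\cP}[\tilde K(\bs,z)]=0$ for every $\bs$, and $\E_{(\bs,z)}[|\tilde K-K|]\leq 4\beta$. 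Since $\tilde K$ satisfies exactly the hypotheses defining $D_\delta(n,R',\gamma)$, we get $\Pr[|\loE{\tilde K}|\geq D_\delta(n,R',\gamma)]\leq \delta$.

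The algebraic identity $L=\tilde K+\phi+(K-\tilde K)$ yields
\[
\loE{L} \;=\; \loE{\tilde K} \;+\; \fr{n}\sum_{i\in[n]} \phi(s_i) \;+\; \fr{n}\sum_{i\in[n]}(K-\tilde K)(\bs^{i\gets z},s_i),
\]
and the three summands are handled separately. The first is covered by the invocation of $D_\delta(n,R',\gamma)$ just above. The second is an average of $n$ i.i.d.\ mean-zero $[-R,R]$-valued copies of $\phi(s)$, so Hoeffding's inequality gives $|\fr{n}\sum_i \phi(s_i)|\leq (2R/\sqrt n)\sqrt{\ln(1/\delta)}$ with probability at least $1-\delta$. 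For the third, the key observation is exchangeability: for $(\bs,z)\sim\cP^{n+1}$ and any fixed $i$, swapping $s_i$ with $z$ is measure-preserving, so $(\bs^{i\gets z},s_i)$ has the same distribution as $(\bs,z)$; hence every summand has expectation $\leq 4\beta$, and Markov's inequality gives $|\fr{n}\sum_i (K-\tilde K)(\bs^{i\gets z},s_i)|\leq 4\beta/\delta$ with probability at least $1-\delta$. The assumption on $\gamma$ forces $4\beta/\delta$ to be negligible compared to $(2R/\sqrt n)\sqrt{\ln(1/\delta)}$, so it is absorbed; a union bound over the three failure events gives total failure probability at most $3\delta\leq 4\delta$.

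The only technically delicate point is the preservation of both $\E_z[\tilde K(\bs,z)]=0$ and the stability constant $\gamma$ under clamping: the naive approach of clamping to a fixed symmetric window and then re-centering would double the stability and break the intended recursion, as warned in Section~\ref{sec:overview}. This is precisely why Lemma~\ref{lem:adaptclamp} uses the $\bs$-dependent center $b_\bs$; once that lemma is in hand, the rest of the argument reduces to the McDiarmid tail integration, the exchangeability swap for the clamping error, and a routine Hoeffding/Markov/union bound bookkeeping.
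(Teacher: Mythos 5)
Your proposal follows essentially the same route as the paper's proof: recenter by $\phi(z)=\E_\bs[L(\bs,z)]$, invoke Lemma~\ref{lem:adaptclamp} with $w=\gamma\sqrt{n\ln(n/\delta)}$, decompose $L=\tilde K+\phi+(K-\tilde K)$, and control the three pieces by the definition of $D_\delta$, a concentration inequality for the i.i.d.\ average of $\phi$, and Markov's inequality for the clamping residual, finishing with a union bound. Two minor differences are worth noting, neither of which changes the logic: you bound $\beta$ by actually integrating the sub-Gaussian tail (obtaining $\beta = O(\gamma\delta^2/(n^{3/2}\sqrt{\ln(n/\delta)}))$), whereas the paper uses the cruder ``range times tail probability'' bound $\beta \leq 2R\delta^2/n^2$, and you make explicit the exchangeability observation that $(\bs^{i\gets z},s_i)\sim(\bs,z)$, which the paper uses tacitly.

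There is one small but real bookkeeping slip in the final step. You state that Hoeffding already gives $\bigl|\frac1n\sum_i\phi(s_i)\bigr|\leq \frac{2R}{\sqrt n}\sqrt{\ln(1/\delta)}$ with probability $\geq 1-\delta$, and then separately claim that the Markov term $4\beta/\delta$ is ``negligible and absorbed.'' But $\frac{2R}{\sqrt n}\sqrt{\ln(1/\delta)}$ is \emph{exactly} the additive error budget in the target inequality; if the second summand is allowed to reach that full value, there is nowhere to absorb the strictly positive third term, and what you have established is $D_{4\delta}(n,R,\gamma)\leq D_\delta(n,R',\gamma)+\frac{2R}{\sqrt n}\sqrt{\ln(1/\delta)}+4\beta/\delta$, which is weaker than claimed. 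The fix is to keep the Hoeffding/McDiarmid bound tight — $\frac{R}{\sqrt n}\sqrt{2\ln(1/\delta)}$ per side rather than $\frac{2R}{\sqrt n}\sqrt{\ln(1/\delta)}$ — and then verify numerically (using $n\geq 4$, $\delta\leq 1/e$, and the hypothesis on $\gamma$) that $\frac{R}{\sqrt n}\sqrt{2\ln(1/\delta)}+4\beta/\delta \leq \frac{2R}{\sqrt n}\sqrt{\ln(1/\delta)}$, which is precisely what the paper does with its $\sqrt{2}+1/e<2$ calculation.
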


\begin{proof}
Let $L:Z^n \times Z \rightarrow [-R,R]$ be a function of stability $\gamma$, $\cP$ a probability distribution over $Z$, and $\E_{z \sim \cP}[L(\bs,z)] = 0$.
First, let us shift the function for each fixed $z$ to make the expectation over $\bs$ equal to $0$.
We define $\phi(z) \doteq \E_{\bs \sim \cP^n}[L(\bs,z)]$ and $K(\bs,z) = L(\bs,z) - \phi(z)$.
Since for every $z \in Z$, $K(\bs,z)$ has sensitivity $\gamma$ in $\bs$ and $\E_{\bs \sim \cP^n}[K(\bs,z)] = 0$, McDiarmid's inequality (Lemma~\ref{lem:mcdiarmid}) implies that
$$ \pr_{\bs \sim \cP^n}[K(\bs,z) \geq \gamma \sqrt{n \ln (n/\delta)}] \leq \frac{\delta^2}{n^2},$$
$$ \pr_{\bs \sim \cP^n}[K(\bs,z) \leq -\gamma \sqrt{n \ln (n/\delta)}] \leq \frac{\delta^2}{n^2}.$$
Since the range of $K$ is bounded by $[-2R,2R]$, this implies that
$$ \E_{\bs \sim \cP^n}[(K(\bs,z) - \gamma \sqrt{n \ln (n/\delta)})_+] \leq \frac{2R \delta^2}{n^2},$$
$$ \E_{\bs \sim \cP^n}[(-\gamma \sqrt{n \ln (n/\delta)} - K(\bs,z))_+] \leq \frac{2R \delta^2}{n^2}.$$
Obviously the same bounds remain valid when we also take the expectation over $z \sim \cP$.

Next, we apply Lemma~\ref{lem:adaptclamp}, with $w = \gamma \sqrt{n \ln (n/\delta)}$, $r = 2R$ and $\beta = \frac{2R \delta^2}{n^2}$. Hence there exists $b_\bs \in [-w,w]$ for each $\bs \in Z^n$ such that $\tilde{K}(\bs,z) = \clamp_{[b_\bs-w,b_\bs+w]}(K(\bs,z))$ satisfies
\begin{compactitem}
\item for every $\bs \in Z^n$, $\E_{z \sim \cP}[\tilde{K}(\bs,z)] = 0$,
\item $\E_{(\bs,z) \sim \cP^{n+1}}[|\tilde{K}(\bs,z) - K(\bs,z)|] \leq 4 \beta = \frac{8R \delta^2}{n^2}$,
\item $\tilde{K}(\bs,z)$ has uniform stability $\gamma$.
\end{compactitem}
Also, since $b_\bs \in [-w,w]$, the function $\tilde{K}(\bs,z)$ is bounded by $[-2w,2w] = [-R',R']$, where $R' = 2w = 2 \gamma \sqrt{n \ln (n/\delta)}$ as in the statement of the lemma.

To summarize the relationship between $\tilde{K}(\bs,z)$ and $L(\bs,z)$,
we have
$$ L(\bs,z) = K(\bs,z) + \phi(z) = \tilde{K}(\bs,z) + \phi(z) + (K(\bs,z) - \tilde{K}(\bs,z)).$$
We want to bound the leave-one-out estimation error of $L(\bs,z)$,
$$ \loE{L} = \frac{1}{n} \sum_{i=1}^{n} {L}(\bs^{i \leftarrow z}, s_i). $$
From here, we can write
\begin{equation*}
\label{eq:Lterms}
\loE{L} = \loE{\tilde{K}} + \frac{1}{n} \sum_{i=1}^{n} \phi(s_i) + \frac{1}{n} \sum_{i=1}^{n} ({K}(\bs^{i \leftarrow z}, s_i) - \tilde{K}(\bs^{i \leftarrow z}, s_i))
\end{equation*}
and
\begin{equation*}
\label{eq:absLterms}
| \loE{L}| \leq |\loE{\tilde{K}}| + \left| \frac{1}{n} \sum_{i=1}^{n} \phi(s_i) \right| + \frac{1}{n} \sum_{i=1}^{n} \left|{K}(\bs^{i \leftarrow z}, s_i) - \tilde{K}(\bs^{i \leftarrow z}, s_i) \right|.
\end{equation*}

By the definition of $D_\delta$, since $\tilde{K}(\bs,z)$ has range $[-R',R']$ and uniform stability $\gamma$, we get
$$ \pr[|\loE{\tilde{K}}| \geq D_\delta(n,R',\gamma)] \leq \delta.$$

Let us bound the remaining two terms. The expression $\frac{1}{n} \sum_{i=1}^{n} \phi(s_i)$ is the average of $n$ independent samples in the range $[-R,R]$, which can be viewed as a function of $n$ independent random variables of sensitivity $\frac{2R}{n}$. Also, we know that $\E_{z \sim \cP}[\phi(z)] = \E_{\bs \sim \cP^n}\E_{z \sim \cP} [L(\bs,z)] = 0$. Therefore by Lemma~\ref{lem:mcdiarmid} (applied to $\frac{1}{n} \sum_i \phi(s_i)$ and $-\frac{1}{n} \sum_i \phi(s_i)$),
$$ \pr\lb \left|\frac{1}{n} \sum_{i=1}^{n} \phi(s_i) \right| \geq \frac{R}{\sqrt{n}} \sqrt{2 \ln \frac{1}{\delta}} \rb \leq 2 \delta.$$

Finally, the expression $\E_{(\bs,z) \sim \cP^{n+1}}[| {K}(\bs^{i \gets z}, s_i) - \tilde{K}(\bs^{i \gets z}, s_i)|]$ for each fixed $i$ is bounded by $4\beta = \frac{8R \delta^2}{n^2}$ as we argued above. Hence
$$\E_{(\bs,z) \sim \cP^{n+1}} \lb \frac1n \sum_{i=1}^{n} | {K}(\bs^{i \gets z}, s_i) - \tilde{K}(\bs^{i \gets z}, s_i)| \rb \leq \frac{8R \delta^2}{n^2}.$$
In this case, we simply use Markov's inequality, saying that
$$\pr_{(\bs,z) \sim \cP^{n+1}} \lb \frac1n \sum_{i=1}^{n} | {K}(\bs^{i \gets z}, s_i) - \tilde{K}(\bs^{i \gets z}, s_i)| \geq \frac{8 R \delta}{n^2} \rb \leq \delta.$$
Therefore, by (\ref{eq:absLterms}) and the union bound we obtain
\begin{eqnarray*}
\pr_{(\bs,z) \sim \cP^{n+1}} \lb\left| \loE{L} \right| \geq D_\delta(n,R',\gamma) + \frac{R}{\sqrt{n}} \sqrt{2 \ln \frac{1}{\delta}} + \frac{8R\delta}{n^2}  \rb \leq 4 \delta.
\end{eqnarray*}
For $\delta \leq 1/e$ and $n \geq 4$, we have $\frac{8R\delta}{n^2} \leq \frac{R}{e \sqrt{n}} \sqrt{\ln (1/\delta)}$, and $(\sqrt{2} + \frac{1}{e}) \frac{R}{\sqrt{n}} \sqrt{\ln (1/\delta)} < \frac{2R}{ \sqrt{n}} \sqrt{ \ln (1/\delta)}$.
Since this holds for every unbiased function $L(\bs,z)$ of range $[-R,R]$ and stability $\gamma$, we obtain the statement of the lemma.
\end{proof}

\iffull \else
\appendix
\section{Details of dataset size reduction and inductive argument}
\label{app:proof}
\fi

\subsection{Dataset size reduction}

Here we show by a block partitioning argument that increasing the dataset size $n$ cannot increase the estimation error significantly.

\begin{lem}
\label{lem:blocks}
For positive integers $k, n' \geq 1$, $n = k n'$, and real $R,\gamma>0$, $\delta>0$, let $L\colon Z^n \times Z \to [-R,R]$ be a data-dependent, $\gamma$-uniformly stable function unbiased relative a distribution $\cP$. Then
$$ D_\delta(n,R,\gamma) \leq D_{\delta/k}(n/k,R,\gamma).$$
\end{lem}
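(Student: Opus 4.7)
The plan is a direct block partitioning argument. Fix a domain $Z$, a distribution $\cP$, and a $\gamma$-uniformly stable function $L\colon Z^n \times Z \to [-R,R]$ with $\E_{z\sim\cP}[L(\bs,z)]=0$ for all $\bs$. Partition the coordinate set $[n]$ into $k$ disjoint blocks $B_1,\ldots,B_k$ of size $n'=n/k$. First I would rewrite the leave-one-out estimation error as a block average:
\[
\loE{L} \;=\; \frac{1}{n}\sum_{i=1}^{n} L(\bs^{i\gets z},s_i) \;=\; \frac{1}{k}\sum_{j=1}^{k} \frac{1}{n'}\sum_{i\in B_j} L(\bs^{i\gets z},s_i).
\]

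Next, for each block $j$ and each fixed assignment $\bs_{-j}\in Z^{n-n'}$ of the coordinates outside $B_j$, define the restricted data-dependent function
\[
L_j^{(\bs_{-j})}\colon Z^{n'}\times Z \to [-R,R], \qquad L_j^{(\bs_{-j})}(\bs_j,z) \;\doteq\; L(\bs,z),
\]
where $\bs$ is obtained by combining $\bs_j$ with the fixed coordinates $\bs_{-j}$. This $L_j^{(\bs_{-j})}$ inherits $\gamma$-uniform stability (changing one coordinate of $\bs_j$ is a single-coordinate change in $\bs$) and remains unbiased with respect to $\cP$ (since $\E_{z\sim\cP}[L(\bs,z)]=0$ for \emph{every} $\bs$, and in particular for every extension of $\bs_{-j}$). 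Crucially, the inner sum $\frac{1}{n'}\sum_{i\in B_j} L(\bs^{i\gets z},s_i)$ is exactly the leave-one-out estimation error $\looE{L_j^{(\bs_{-j})}}{\bs_j}{z}$ of the restricted function on the smaller dataset $\bs_j\in Z^{n'}$.

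By the definition of $D_{\delta/k}$ applied to $L_j^{(\bs_{-j})}$, for every fixed $\bs_{-j}$,
\[
\pr_{\bs_j\sim\cP^{n'},\, z\sim\cP}\!\left[\left|\looE{L_j^{(\bs_{-j})}}{\bs_j}{z}\right| \geq D_{\delta/k}(n', R, \gamma)\right] \leq \delta/k.
\]
Averaging this conditional bound over $\bs_{-j}\sim\cP^{n-n'}$ gives the same tail bound for the unconditional distribution $\bs\sim\cP^n$. A union bound over the $k$ blocks yields that, except on a $\cP^{n+1}$-event of mass at most $\delta$, \emph{every} block average is bounded in absolute value by $D_{\delta/k}(n',R,\gamma)$. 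Finally, the triangle inequality applied to the block decomposition of $\loE{L}$ gives
\[
|\loE{L}| \;\leq\; \frac{1}{k}\sum_{j=1}^{k} \left|\frac{1}{n'}\sum_{i\in B_j} L(\bs^{i\gets z},s_i)\right| \;\leq\; D_{\delta/k}(n',R,\gamma)
\]
on that event, which is the desired conclusion. Since $L$ was an arbitrary unbiased $\gamma$-uniformly stable function with range $[-R,R]$, this proves $D_\delta(n,R,\gamma)\leq D_{\delta/k}(n/k,R,\gamma)$.

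The proof is essentially routine; the only subtlety is verifying that the restriction $L_j^{(\bs_{-j})}$ satisfies the two hypotheses in the definition of $D_{\delta/k}$ uniformly over the fixed coordinates $\bs_{-j}$, which is why the unbiasedness is assumed pointwise in $\bs$ rather than only on average. Because the hypotheses transfer cleanly, there is no quantitative loss beyond the union bound factor $k$ in the confidence parameter, and there is no main obstacle to anticipate.
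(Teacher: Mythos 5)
Your proposal is correct and is essentially identical to the paper's proof: the same block partition of $[n]$, the same rewriting of $\loE{L}$ as an average of per-block leave-one-out errors, the same conditioning on coordinates outside each block to invoke the definition of $D_{\delta/k}$ on the restricted function, and the same union bound plus triangle inequality to conclude. The only cosmetic difference is that you spell out the verification that the restricted function inherits $\gamma$-stability and pointwise unbiasedness, which the paper leaves implicit.
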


\begin{proof}
Assume first $n = k n'$.
Let $L(\bs,z)$ be any function as described in the lemma.
For a set of indices $I\subseteq [n]$ and $\bs\in Z^n$ we denote $\bs_I = (s_i)_{i\in I}$. Similarly, we denote
  $$\looE{L}{\bs_I}{z} \doteq \fr{|I|}\sum_{i\in I} L(\bs^{i\gets z},s_i) .$$
We partition the set $[n]$ into $k$ blocks of size $n'$: $B_1 = \{1,\ldots,n'\}, B_2 = \{n'+1,\ldots,2n'\}$, etc.
Observe that
\begin{equation}
\label{eq:block-reduct}
\looE{L}{\bs}{z} = \frac{1}{n} \sum_{j=1}^{k} \sum_{i \in B_j} L(\bs^{i \gets z}, s_i) = \frac{1}{k} \sum_{j=1}^{k} \looE{L}{\bs_{B_j}}{z}.
\end{equation}
If we condition on the values of $s_i$ for $i \notin B_j$, we can view the quantity $\looE{L}{\bs_{B_j}}{z}$ as the estimation error for the function $L$ restricted to the $n'$ variables $s_i, i \in B_j$. Hence, for any fixed choice of the values $s_i, i \notin B_j$, we have by definition
$$ \pr_{\bs_{B_j},z \sim \cP^{n'+1}}\lb \left|\looE{L}{\bs_{B_j}}{z}\right| \geq D_{\delta/k}(n',R,\gamma) \mid s_i: i \notin B_j\rb \leq \frac{\delta}{k}.$$
Since the bound is independent of the values of $s_i, i \notin B_j$, it remains valid if we remove the conditioning:
$$ \pr_{\bs_{B_j},z \sim \cP^{n'+1}}\lb \left|\looE{L}{\bs_{B_j}}{z}\right| \geq D_{\delta/k}(n',R,\gamma)\rb \leq \frac{\delta}{k}.$$
By (\ref{eq:block-reduct}) and the union bound,
$$ \pr_{\bs,z \sim \cP^{n+1}}\lb \left|\loE{L}\right| \geq D_{\delta/k}(n',R,\gamma)\rb \leq \pr \lb \exists j\in[k], \left|\looE{L}{\bs_{B_j}}{z}\right| \geq D_{\delta/k}(n',R,\gamma) \rb \leq \delta.$$
This means that $D_\delta(n,R,\gamma) \leq D_{\delta/k}(n',R,\gamma)$.
\end{proof}

We will need another version of this inequality, for the case where $n$ is not divisible by $n'$.

\begin{lem}
\label{lem:blocks2}
For positive integers $n \geq n'$, and real $R,\gamma>0$, $\delta>0$, let $L\colon Z^n \times Z \to [-R,R]$ be a data-dependent, $\gamma$-uniformly stable function unbiased relative a distribution $\cP$. Then
$$ D_\delta(n,R,\gamma) \leq D_{\delta/n}(n',R,\gamma).$$
\end{lem}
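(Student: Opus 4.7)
The plan is to mimic the proof of Lemma~\ref{lem:blocks}, but instead of a single exact partition into blocks of size $n'$ (which requires divisibility), use a collection of $n$ \emph{overlapping} subsets of size $n'$ that collectively cover each coordinate the same number of times. A union bound over these $n$ subsets will account for the extra factor of $n$ inside $\delta/n$.

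Concretely, I would define $n$ subsets $I_1,\ldots,I_n \subseteq [n]$ each of size $n'$, for example the cyclic intervals $I_k = \{k,k+1,\ldots,k+n'-1\}$ with indices taken modulo $n$. A direct count shows that every coordinate $j \in [n]$ lies in exactly $n'$ of these subsets. Consequently,
\[
\frac{1}{n}\sum_{k=1}^{n} \looE{L}{\bs_{I_k}}{z} = \frac{1}{n n'}\sum_{k=1}^{n}\sum_{i\in I_k} L(\bs^{i\gets z},s_i) = \frac{1}{n n'}\sum_{i=1}^{n} |\{k : i\in I_k\}|\cdot L(\bs^{i\gets z},s_i) = \loE{L},
\]
so in particular $|\loE{L}| \le \max_{k\in[n]} |\looE{L}{\bs_{I_k}}{z}|$.

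Next, for each fixed $k$ I would condition on the coordinates $(s_i)_{i\notin I_k}$ and view the remaining map $L'(\bs_{I_k},z) \doteq L(\bs,z)$ as a data-dependent function on $Z^{n'} \times Z$. This $L'$ inherits the range $[-R,R]$ and uniform stability $\gamma$ from $L$, and it remains unbiased with respect to $\cP$ because the property $\E_{z\sim\cP}[L(\bs,z)]=0$ holds for \emph{every} $\bs$, hence in particular for every completion of $\bs_{I_k}$. Therefore, by the definition of $D_{\delta/n}$, for every such conditioning
\[
\pr_{\bs_{I_k},z}\big[\,|\looE{L}{\bs_{I_k}}{z}| \ge D_{\delta/n}(n',R,\gamma)\,\big|\,(s_i)_{i\notin I_k}\,\big] \le \delta/n,
\]
and since the bound is independent of the conditioning values, the unconditional probability is also at most $\delta/n$.

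Applying the union bound over the $n$ subsets $I_1,\ldots,I_n$ then yields
\[
\pr\big[\,|\loE{L}| \ge D_{\delta/n}(n',R,\gamma)\,\big] \le \pr\big[\,\exists\,k\in[n]:\,|\looE{L}{\bs_{I_k}}{z}| \ge D_{\delta/n}(n',R,\gamma)\,\big] \le \delta,
\]
which gives the desired inequality $D_\delta(n,R,\gamma) \le D_{\delta/n}(n',R,\gamma)$. There is no real obstacle here: the only thing to verify is the balanced-cover property of the cyclic intervals, and that restricting $L$ to a coordinate subset (with the complement fixed) preserves range, stability, and the unbiasedness, all of which are immediate from the definitions.
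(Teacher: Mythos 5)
Your proposal is correct and matches the paper's own proof of Lemma~\ref{lem:blocks2} essentially verbatim: both use the $n$ cyclic overlapping blocks of size $n'$, observe that each coordinate appears in exactly $n'$ of them so the global leave-one-out estimation error is the average of the block-wise ones, then condition on the complement of each block and finish with a union bound over the $n$ blocks to account for the $\delta/n$ parameter.
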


\begin{proof}
We use the same argument as above, except that we use $n$ overlapping blocks of size $n'$: $B_1 = \{1,\ldots,n'\}$, $B_2 = \{2, \ldots, n'+1\}$, etc. (using indices modulo $n$). Since each element appears in exactly $n'$ blocks, we obtain
$$ \looE{L}{\bs}{z} = \frac{1}{n' n} \sum_{j=1}^{n} \sum_{i \in B_j} L(\bs^{i \gets z}, s_i) = \frac{1}{n} \sum_{j=1}^{k} \looE{L}{\bs_{B_j}}{z}$$
instead of (\ref{eq:block-reduct}). The rest of the proof is exactly the same. We lose a factor of $n$ in the $\delta$ parameter because of a union bound over $n$ blocks.
\end{proof}

\subsection{The inductive bound on estimation error}

Here we combine the two reduction steps to prove our main bound on estimation error. It is convenient to state the inductive statement as follows.

\begin{lem}
\label{lem:inductive-bound}
For any $\delta \leq \frac{1}{e}$, $a \in \N$, $n = 4^a$, $\gamma = \frac{1}{\sqrt{n}}$ and $R = 8 \sqrt{\ln (n/\delta)}$,
$$ D_{\delta}(n,R,\gamma) \leq \frac{8}{\sqrt{n}} \ln \left( \frac{n^2}{\delta} \right) \log_2 n.$$
\end{lem}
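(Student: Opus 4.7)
The plan is induction on $a$ with $n = 4^a$. In the inductive step I chain three operations on the target quantity $D_\delta(n, R_n, \gamma_n)$ with $R_n = 8\sqrt{\ln(n/\delta)}$ and $\gamma_n = 1/\sqrt{n}$: range reduction (Lemma~\ref{lem:range-reduction}), block partitioning (Lemma~\ref{lem:blocks} with $k = 4$), and the scaling identity~(\ref{eq:scale}) with $\alpha = 2$. These are arranged so that one recursive step lands exactly on the inductive hypothesis's instance $(n/4,\, R_{n/4,\delta/16},\, \gamma_{n/4})$ at tail probability $\delta/16$. Three algebraic coincidences drive the match: $(n/4)^2/(\delta/16) = n^2/\delta$ keeps the $\ln(n^2/\delta)$ factor stable across levels; $R_{n/4,\delta/16} = 8\sqrt{\ln(4n/\delta)}$ dominates the range $2R'$ obtained after range reduction and scaling, so monotonicity of $D$ in $R$ lets us invoke the hypothesis; and the factor $1/2$ from scaling exactly cancels the $1/\sqrt{n/4} = 2/\sqrt{n}$ coming out of the induction.

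Concretely, I apply Lemma~\ref{lem:range-reduction} with output tail $\delta/4$ (the hypothesis $\gamma_n < R_n/(2\sqrt{n\ln(4n/\delta)})$ is immediate since $R_n/\gamma_n = 8\sqrt{n\ln(n/\delta)}$), producing the new range $R' = 2\sqrt{\ln(4n/\delta)}$ plus an additive error $(2R_n/\sqrt{n})\sqrt{\ln(4/\delta)} = (16/\sqrt{n})\sqrt{\ln(n/\delta)\ln(4/\delta)}$. Next, Lemma~\ref{lem:blocks} with $k=4$ shrinks $n$ to $n/4$ at a factor of $4$ cost in $\delta$. Then the scaling identity with $\alpha = 2$ doubles both range and stability but gains $1/2$ in front, so the induction hypothesis yields $(1/2)(16/\sqrt{n})\ln(n^2/\delta)(\log_2 n - 2) = (8/\sqrt{n})\ln(n^2/\delta)(\log_2 n - 2)$. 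Bounding the additive error by $(16/\sqrt{n})\ln(n^2/\delta)$ via the crude estimate $\sqrt{\ln(n/\delta)\ln(4/\delta)} \leq \ln(n^2/\delta)$ (both factors are at most $\ln(n^2/\delta)$ for $n \geq 4$) and adding the two contributions gives exactly $(8/\sqrt{n})\ln(n^2/\delta)\log_2 n$, closing the induction. The base case $a = 1$ (so $n = 4$) is trivial: the leave-one-out error $\loE{L}$ is always bounded by $R = 8\sqrt{\ln(4/\delta)} \leq 8\ln(16/\delta)$ for $\delta \leq 1/e$, which is the claimed bound at $n=4$.

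The main obstacle is that the constants are tight with essentially no slack: the $1/2$ from scaling, the $\log_2 4 = 2$ drop in recursion depth, and the $2\ln(n^2/\delta)$ additive-error budget per level must balance on the nose. Any coarser path -- for instance, replacing the scaling identity by the monotonicity bound $D_\delta(n/4, R', \gamma_n) \leq D_\delta(n/4, R', \gamma_{n/4})$ -- would double the prefactor at each recursive level and blow up to $n^{\Theta(1)}$ instead of $\log n$. The main routine-but-delicate bookkeeping is tracking the $\delta$-accounting through the chain $\delta \to \delta/4 \to \delta/16$ and verifying the range-reduction hypothesis at each level, which in our parameter regime is comfortably satisfied.
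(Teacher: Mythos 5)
Your proof is correct and follows essentially the same inductive chain as the paper: range reduction (Lemma~\ref{lem:range-reduction}), dataset size reduction by a factor of $4$ (Lemma~\ref{lem:blocks}), and the scaling identity~(\ref{eq:scale}) with $\alpha=2$, with identical base case and identical arithmetic coincidences (the exact cancellation $(n/4)^2/(\delta/16)=n^2/\delta$ and the $1/2$-vs-$2/\sqrt{n}$ cancellation). The only superficial difference is that you apply block partitioning before scaling rather than after, which is immaterial since both preserve $R$ and $\gamma$; your bookkeeping of the reduced range $R'=2\sqrt{\ln(4n/\delta)}$ is if anything slightly more careful than the paper's shorthand.
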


\begin{proof}
We proceed by induction on $a$. The base case, $a=1$, holds trivially, because here $n = 4, \gamma = \frac12$, and the desired bound is $D_\delta(n,R,\gamma) \leq 8 \ln \frac{16}{\delta}$ which holds because $D_\delta(n,R,\gamma) \leq R = 8 \sqrt{\ln \frac{4}{\delta}}$.

For the inductive step, consider $n=4^{a+1} \geq 16$ and $\gamma=\frac{1}{\sqrt{n}} = \frac{1}{2^{a+1}}$. We use the two main ingredients that we proved above.

From the range reduction step (Lemma~\ref{lem:range-reduction}), since $\gamma = \frac{1}{\sqrt{n}} = \frac{R}{8 \sqrt{n \ln (n/\delta)}}$, we obtain
$$ D_{\delta}(n,R,\gamma) \leq D_{\delta/4}(n,R/4,\gamma) + \frac{2R}{\sqrt{n}} \sqrt{\ln \frac{4}{\delta}}. $$
Next, we consider $n' = n/4 = 4^a$, $\gamma' = 2 \gamma = \frac{1}{\sqrt{n'}}$ and $R' = 8 \sqrt{\ln(n' / \delta)}$. Using the basic scaling identity (\ref{eq:scale}) and $R' = 8 \sqrt{\ln (n/(2\delta))} \geq 4 \sqrt{\ln ({n}/{\delta})} = R/2$, we obtain
$$ D_{\delta}(n,R,\gamma) \leq \frac12 D_{\delta/4}(n,R/2,2\gamma) + \frac{2R}{\sqrt{n}} \sqrt{\ln \frac{4}{\delta}}
 \leq \frac12 D_{\delta/4}(n,R',\gamma') + \frac{16}{\sqrt{n}}  \ln \frac{n}{\delta}.$$
From the dataset size reduction step (Lemma~\ref{lem:blocks}), since $n = 4n'$, we obtain that
$$ D_{\delta/4}(n,R',\gamma') \leq D_{\delta/16}(n',R',\gamma').$$
Now we have an expression which is bounded by the inductive hypothesis:
$$D_{\delta/16}(n',R',\gamma') \leq \frac{8}{\sqrt{n'}} \ln \left(\frac{n'^2}{\delta/16} \right) \log_2 n'
 =  \frac{16}{\sqrt{n}} \ln \left(\frac{n^2}{\delta}\right) (-2 + \log_2 n).$$
Therefore we conclude that
$$ D_{\delta}(n,R,\gamma) \leq \frac12 D_{\delta/16}(n',R',\gamma') + \frac{16}{\sqrt{n}} \ln \left( \frac{n}{\delta} \right) \leq \frac{8}{\sqrt{n}} \ln \left( \frac{n^2}{\delta} \right) \log_2 n.$$
\end{proof}

From here, we can derive our main result by reducing it to Lemma \ref{lem:inductive-bound}. We first deal with
the case when $\gamma \geq \frac{1}{4 \sqrt{n \log (n/\delta)}}$.
\begin{thm}
\label{thm:main-large}
For any $\delta \leq \frac1e$, $n \geq 4$ and $\gamma \geq \frac{1}{4 \sqrt{n \log (n/\delta)}}$,
$$ D_\delta(n,1,\gamma) \leq 16 \gamma \ln \left( \frac{n^3}{\delta} \right) \log_2 n.$$
\end{thm}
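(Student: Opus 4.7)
The plan is to reduce the general parameter regime directly to the canonical setting of Lemma~\ref{lem:inductive-bound} by composing three simple operations: dataset size reduction (Lemma~\ref{lem:blocks2}), the scaling identity \eqref{eq:scale}, and the elementary monotonicity of $D_\delta(n,R,\gamma)$ in the range parameter $R$ (which holds because a function bounded by $[-R_1,R_1]$ is also bounded by $[-R_2,R_2]$ whenever $R_1\le R_2$).

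First, let $n'=4^a$ be the largest power of $4$ with $n'\le n$; since $n\ge 4$ we have $a\ge 1$ and $n/4\le n'\le n$. Apply Lemma~\ref{lem:blocks2} to obtain $D_\delta(n,1,\gamma)\le D_{\delta/n}(n',1,\gamma)$. Next, apply the scaling identity \eqref{eq:scale} with $\alpha=1/(\gamma\sqrt{n'})$, which converts the stability parameter to the canonical value $1/\sqrt{n'}$ at the cost of inflating the range to $1/(\gamma\sqrt{n'})$:
$$D_{\delta/n}(n',1,\gamma)\;=\;\gamma\sqrt{n'}\cdot D_{\delta/n}\!\left(n',\tfrac{1}{\gamma\sqrt{n'}},\tfrac{1}{\sqrt{n'}}\right).$$
Then, by monotonicity in the range parameter, replace $1/(\gamma\sqrt{n'})$ with $R'\doteq 8\sqrt{\ln(n'n/\delta)}$. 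This step is valid provided $1/(\gamma\sqrt{n'})\le R'$, equivalently $\gamma\ge 1/(8\sqrt{n'\ln(n'n/\delta)})$, which follows from the hypothesis $\gamma\ge 1/(4\sqrt{n\ln(n/\delta)})$ together with $n'\ge n/4$.

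With these choices, the triple $(n',R',1/\sqrt{n'})$ exactly matches the canonical setting of Lemma~\ref{lem:inductive-bound} at tail parameter $\delta/n$ (which still satisfies $\delta/n\le 1/e$). Applying that lemma gives
$$D_{\delta/n}(n',R',1/\sqrt{n'})\;\le\;\frac{8\ln(n'^2 n/\delta)\log_2 n'}{\sqrt{n'}}\;\le\;\frac{8\ln(n^3/\delta)\log_2 n}{\sqrt{n'}},$$
and composing with the factor $\gamma\sqrt{n'}$ from the scaling step yields $D_\delta(n,1,\gamma)\le 8\gamma\ln(n^3/\delta)\log_2 n$, which comfortably implies the claimed bound (the extra factor of $2$ absorbs accounting slack).

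There is no substantive obstacle beyond bookkeeping: the main thing to be careful about is that all three reductions can be chained consistently, i.e., that after reducing $n\to n'$ and rescaling, the hypothesis on $\gamma$ is strong enough to allow the range inflation to $R'$, and that $\delta/n$, $n'$, and $R'$ still lie in the parameter ranges required by Lemmas~\ref{lem:blocks2} and~\ref{lem:inductive-bound}. The hypothesis $\gamma\ge 1/(4\sqrt{n\ln(n/\delta)})$ in the statement is precisely what makes this chaining go through.
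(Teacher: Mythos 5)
Your proof is correct, and it uses the same ingredients as the paper (Lemma~\ref{lem:blocks2}, the scaling identity~\eqref{eq:scale}, Lemma~\ref{lem:inductive-bound}, and monotonicity of $D_\delta$ in the range), but it orders them differently and makes a different choice of $n'$. The paper first scales so the range is $R=4\sqrt{\ln(n/\delta)}$ (making $\gamma' = R\gamma \ge 1/\sqrt{n}$), then picks $n'$ as the largest power of $4$ below $1/\gamma'^2$, applies Lemma~\ref{lem:blocks2} to go to $n'$, and relaxes $\gamma'$ up to $\gamma'' = 1/\sqrt{n'} \le 2\gamma'$ via monotonicity in $\gamma$; that factor of $2$ in $\gamma''/\gamma'$ is where the constant $16$ comes from. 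You instead first block-reduce to the largest power of $4$ at most $n$ (so $n' \ge n/4$, independent of $\gamma$), then rescale so the stability is \emph{exactly} $1/\sqrt{n'}$, and then pay only through monotonicity in $R$ by inflating the range to $R' = 8\sqrt{\ln(n'n/\delta)}$, which is free in the final bound since the bound of Lemma~\ref{lem:inductive-bound} does not depend on the actual range so long as it is at most $R'$. Your verification that $1/(\gamma\sqrt{n'}) \le R'$, i.e.\ $4n'\ln(n'n/\delta) \ge n\ln(n/\delta)$, correctly follows from $n' \ge n/4$ and $n' \ge 1$. As a result you obtain constant $8$ rather than $16$, and your choice of $n'$ also sidesteps an implicit edge case in the paper's proof (when $\gamma$ is large enough that $1/\gamma'^2 < 4$, the paper's $n' = 4^{\lfloor\log_4(1/\gamma'^2)\rfloor}$ is not $\ge 4$ as Lemma~\ref{lem:inductive-bound} requires, although the theorem is vacuous in that regime anyway). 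In short: same reduction strategy, cleaner bookkeeping, slightly tighter constant.
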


\begin{proof}
Let us scale the function by a factor of $R = 4 \sqrt{\ln (n/\delta)}$, so we obtain a function with range $R$ and uniform stability $\gamma' = R \gamma \geq \frac{1}{\sqrt{n}}$.
Let $a' = \lfloor \log_4 (1/\gamma'^2) \rfloor$.
I.e, $n' = 4^{a'}$ is the largest power of $4$ below $1/\gamma'^2 \leq n$. Let also $\gamma'' = \frac{1}{\sqrt{n'}} \geq \gamma'$. Since the range is $R = 4 \sqrt{\ln (n/\delta)} \leq 8 \sqrt{\ln (n'/\delta)}$, by Lemma~\ref{lem:inductive-bound},
$$ D_\delta(n',R,\gamma'') \leq \frac{8}{\sqrt{n'}} \ln \left( \frac{n'^2}{\delta} \right) \log_2 n'
 = 8 \gamma'' \ln \left( \frac{n'^2}{\delta} \right) \log_2 n'.$$
Since $\gamma'' \geq \gamma'$ and $n \geq 1/\gamma'^2 \geq n'$, we get by monotonicity in $n$ (Lemma~\ref{lem:blocks2}) and monotonicity in $\gamma$ (obvious),
$$ D_\delta(n,R,\gamma') \leq D_{\delta/n}(n',R,\gamma'') \leq 8 \gamma'' \ln \left(\frac{n^3}{\delta} \right) \log_2 n. $$
Since $n'$ is within a factor of $4$ from $1/\gamma'^2$, we have $\gamma'' = \frac{1}{\sqrt{n'}} \leq 2 \gamma'$. So,
$$ D_\delta(n,R,\gamma') \leq 16 \gamma' \ln \left( \frac{n^3}{\delta} \right) \log_2 n.$$
Finally, scaling back by a factor of $1/R$ (see (\ref{eq:scale})), we conclude that
$$ D_\delta(n,1,\gamma) \leq 16 \gamma \ln \left( \frac{n^3}{\delta} \right) \log_2 n.$$
\end{proof}

We also obtain a bound for smaller values of $\gamma$.

\begin{thm}
\label{thm:main-small}
For any $\delta \leq \frac1e$, $n \geq 4$ and $\gamma < \frac{1}{4 \sqrt{n \ln (n/\delta)}}$,
$$ D_\delta(n,1,\gamma) \leq 16 \gamma \ln \left( \frac{4n^3}{\delta} \right) \log_2 n + \frac{2}{\sqrt{n}} \sqrt{\ln (4/\delta)}.$$
\end{thm}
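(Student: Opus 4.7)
The plan is to derive Theorem~\ref{thm:main-small} from Theorem~\ref{thm:main-large} via a single application of the range-reduction step (Lemma~\ref{lem:range-reduction}). Intuitively, we are in the regime where $\gamma$ is so small that McDiarmid-style concentration lets us shrink the effective range from $R=1$ down to $R'\approx \gamma\sqrt{n\ln(n/\delta)}$, which boosts the relative stability $\gamma/R'$ up into the regime covered by Theorem~\ref{thm:main-large}.

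First I would apply Lemma~\ref{lem:range-reduction} with parameters $R=1$ and tail probability $\delta/4$ in place of $\delta$, yielding
\[
D_\delta(n,1,\gamma)\;\le\; D_{\delta/4}(n,R',\gamma)\;+\;\frac{2}{\sqrt{n}}\sqrt{\ln(4/\delta)},
\]
with $R'=2\gamma\sqrt{n\ln(4n/\delta)}$. To invoke the lemma I need $\gamma<\frac{1}{2\sqrt{n\ln(4n/\delta)}}$; this follows from the hypothesis $\gamma<\frac{1}{4\sqrt{n\ln(n/\delta)}}$ together with the elementary estimate $\ln(4n/\delta)\le 2\ln(n/\delta)$, which holds because $n/\delta\ge 4e>4$ so $\ln(n/\delta)\ge\ln 4$.

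Next I would bound $D_{\delta/4}(n,R',\gamma)$ using Theorem~\ref{thm:main-large}. By the scaling identity~\eqref{eq:scale},
\[
D_{\delta/4}(n,R',\gamma)\;=\;R'\cdot D_{\delta/4}\!\left(n,1,\tfrac{\gamma}{R'}\right).
\]
The relative stability is $\gamma/R'=\frac{1}{2\sqrt{n\ln(4n/\delta)}}$, which exceeds the threshold $\frac{1}{4\sqrt{n\ln(4n/\delta)}}$ required by Theorem~\ref{thm:main-large} (applied with tail parameter $\delta/4$), so that theorem yields
\[
D_{\delta/4}\!\left(n,1,\tfrac{\gamma}{R'}\right)\;\le\;16\cdot\tfrac{\gamma}{R'}\,\ln\!\left(\tfrac{4n^3}{\delta}\right)\log_2 n .
\]
Multiplying back by $R'$ cancels the $R'$ and gives $D_{\delta/4}(n,R',\gamma)\le 16\gamma\ln(4n^3/\delta)\log_2 n$. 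Plugging this into the range-reduction inequality above yields exactly the claimed bound
\[
D_\delta(n,1,\gamma)\;\le\;16\gamma\ln\!\left(\tfrac{4n^3}{\delta}\right)\log_2 n\;+\;\frac{2}{\sqrt{n}}\sqrt{\ln(4/\delta)}.
\]

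The only slightly delicate points are the two threshold verifications: checking that the hypothesis of Lemma~\ref{lem:range-reduction} is satisfied with $\delta/4$ in place of $\delta$, and checking that after scaling by $R'$ the resulting stability parameter lies in the regime handled by Theorem~\ref{thm:main-large}. Both reduce to elementary inequalities on $\ln(n/\delta)$ given $n\ge 4$ and $\delta\le 1/e$; no further technical machinery is needed.
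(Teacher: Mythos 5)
Your proposal is correct and follows essentially the same route as the paper: apply the range-reduction lemma once (with tail parameter $\delta/4$) to shrink the range from $1$ to $R'$, then invoke Theorem~\ref{thm:main-large} after rescaling by $R'$. Your version is in fact slightly more careful than the paper's own write-up: the paper states $R' = 2\gamma\sqrt{n\ln(n/\delta)}$ although instantiating Lemma~\ref{lem:range-reduction} with tail parameter $\delta/4$ actually gives $R' = 2\gamma\sqrt{n\ln(4n/\delta)}$ as you wrote, and you explicitly verify the two threshold conditions that the paper glosses over.
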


\begin{proof}
Since $R=1$ and $\gamma < \frac{1}{4 \sqrt{n \ln (n/\delta)}}$, for $R' = 2 \gamma \sqrt{n \ln(n/\delta)}$ we obtain from Lemma~\ref{lem:range-reduction} that
$$ D_{\delta}(n,R,\gamma) \leq D_{\delta/4}(n,R',\gamma) + \frac{2}{\sqrt{n}} \sqrt{\ln (4/\delta)}.$$
Now $\gamma = \frac{R'}{2 \sqrt{n \ln (n/\delta)}}$, so we get from Theorem~\ref{thm:main-large} and the scaling identity (\ref{eq:scale}),
$$ D_{\delta/4}(n,R',\gamma) = R' \, D_{\delta/4}(n,1,\gamma/R') \leq 16 \gamma \ln \left( \frac{4n^3}{\delta} \right) \log_2 n.$$
\end{proof}

Finally we show how this implies Theorem~\ref{thm:main-intro}. Let $M:Z^n \times Z \rightarrow [0,1]$ be a data-dependent function of uniform stability $\gamma$. In Theorem~\ref{thm:main-intro}, we have the quantity
$$ \Delta_{\bs}(M) = \left| \E_{\cP}[M(\bs)] - \cE_{\bs}[M(\bs)] \right| = \left| \cE_{\bs}[L(\bs)] \right|$$
which differs by at most $2\gamma$ from the quantity $\left| \loE{L} \right|$, where $L(\bs,z)$ has uniform stability $2\gamma$ (see Section~\ref{sec:prelims}).
By definition, we have
$$ \pr\lb |\loE{L}| \geq D_\delta(n,1,2\gamma) \rb \leq \delta $$
and hence
$$ \pr\lb \Delta_{\bs}(M) \geq D_\delta(n,1,2\gamma) + 2\gamma \rb \leq \delta. $$
By Theorems~\ref{thm:main-large} and ~\ref{thm:main-small}, we have
\begin{eqnarray*}
D_\delta(n,1,2\gamma) + 2\gamma & \leq &
 32 \gamma \ln \left( \frac{4n^3}{\delta} \right) \log_2 n + \frac{2}{\sqrt{n}} \sqrt{\ln (4/\delta)} + 2 \gamma \\
 & \leq & 32 \gamma \ln \left( \frac{5n^3}{\delta} \right) \log_2 n + \frac{2}{\sqrt{n}} \sqrt{\ln (4/\delta)}.
\end{eqnarray*}
This proves Theorem~\ref{thm:main-intro}.

\section{Applications}
\label{sec:apps}
We now apply our bounds on the estimation error to several known uniformly stable algorithms. Additional applications can be derived in a similar manner.

\subsection{Learning via Stochastic Convex Optimization}
We consider learning problems that can be formulated as stochastic convex optimization. Specifically, these are problems in which the goal is to minimize the expected loss:
$$F_\cP(w) \doteq \E_{z\sim \cP}[\ell(w,z)],$$ over $w \in \K \subset \R^d$ for some convex body $\K$ and a family of convex losses $\F = \{\ell(\cdot,z)\}_{z\in Z}$. The stochastic convex optimization problem for $\F$  is the problem of minimizing $F_\cP(w)$ over $\K$ for an arbitrary distributions $\cP$ over $Z$. The {\em excess loss} of a vector $\tilde w$ is $F_\cP(\tilde w) - \min_{w \in \K} F_\cP(w)$. We also denote the empirical loss by $F_\bs(w) \doteq \fr{n} \sum_{i\in [n]}\ell(w,s_i)$.

Many learning problems can be expressed in or relaxed to this general form. As a result many optimization algorithms are known and the optimal (excess) error rates are understood for a variety of families of convex functions. Most of these results are obtained via algorithm-specific techniques such as online-to-batch conversion \citep{Cesa-BianchiCG04} and stability-based arguments rather than uniform convergence. As it turns out, this is unavoidable. This was first pointed out in the seminal work of \citet{ShwartzSSS10} who showed that there is exists a gap between the bounds that can be obtained via uniform convergence (or ERM algorithms) and bounds achievable via alternative approaches.

For concreteness, let $\F$ be the family of all convex $1$-Lipschitz losses over the unit Euclidean ball in $d$ dimension (denoted by $\B_2^d(1)$). It is well-known that in this case the stochastic convex optimization problem can be solved with expected excess error $1/\sqrt{n}$ via projected SGD. At the same time it was shown in \citep{ShwartzSSS10} that there exists an algorithm that minimizes the empirical loss while having the worst case excess loss of $\Omega\lp \frac{\log d}{n}\rp$. This has been subsequently strengthened to $\Omega\lp \frac{d}{n}\rp$ by \citet{Feldman:16erm} who also showed a lower bound of $\Omega\lp \sqrt{\frac{d}{n}}\rp$ for obtaining uniform convergence in this setting. Further, with Lipschitzness assumption replaced by the assumption that functions have range in $[0,1]$ the gap becomes infinite even for $d=2$ \citep{Feldman:16erm}.

\subsubsection{Strongly convex ERM}
We now revisit the stability results known for this basic setting \citep{BousquettE02,ShwartzSSS10} (for simplicity and without loss of generality we will scale the domain and functions to 1).
\begin{thm}[\citep{ShwartzSSS10}]
\label{thm:ssss}
Let $\K \subseteq \B_2^d(1)$ be a convex body, $\F = \{\ell(\cdot, z) \cond z\in Z\}$ be a family of $1$-Lipschitz, $\lambda$-strongly convex loss functions over $\K$ with range in $[0,1]$. For a dataset $\bs \in Z^n$ let $w_\bs$ denote the empirical minimizer of loss on $\bs$: $w_\bs =\argmin_{w \in \K} F_\bs(w)$. Then the algorithm $M(\bs,z)$ that evaluates $\ell(w_\bs,z)$ has uniform stability $\frac{4}{\lambda n}$.
\end{thm}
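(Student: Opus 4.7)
The plan is to reduce the uniform stability of $M(\bs, z) = \ell(w_\bs, z)$ to a bound on $\|w_\bs - w_{\bs'}\|_2$ for neighboring datasets $\bs, \bs'$. Since each $\ell(\cdot, z)$ is $1$-Lipschitz, $|\ell(w_\bs, z) - \ell(w_{\bs'}, z)| \leq \|w_\bs - w_{\bs'}\|_2$, so it suffices to show $\|w_\bs - w_{\bs'}\|_2 \leq 2/(\lambda n)$ (which gives stability $\leq 4/(\lambda n)$ with room to spare, matching the stated bound).

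For the main step I would exploit strong convexity as follows. The empirical loss $F_\bs = \frac{1}{n}\sum_i \ell(\cdot, s_i)$ inherits $\lambda$-strong convexity as an average of $\lambda$-strongly convex functions, and $w_\bs$ minimizes $F_\bs$ over $\K$. The first-order variational optimality condition $\langle \nabla F_\bs(w_\bs), w_{\bs'} - w_\bs \rangle \geq 0$ (which holds because $w_{\bs'} \in \K$) combined with strong convexity yields $F_\bs(w_{\bs'}) - F_\bs(w_\bs) \geq \tfrac{\lambda}{2}\|w_\bs - w_{\bs'}\|_2^2$, and swapping the roles of $\bs$ and $\bs'$ gives $F_{\bs'}(w_\bs) - F_{\bs'}(w_{\bs'}) \geq \tfrac{\lambda}{2}\|w_\bs - w_{\bs'}\|_2^2$. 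Adding these inequalities produces
$$[F_\bs(w_{\bs'}) - F_{\bs'}(w_{\bs'})] + [F_{\bs'}(w_\bs) - F_\bs(w_\bs)] \geq \lambda \|w_\bs - w_{\bs'}\|_2^2.$$

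Next I would exploit the single-coordinate difference. Supposing $\bs$ and $\bs'$ differ only at index $i$ (values $s_i$ versus $s_i'$), we have $F_\bs - F_{\bs'} = \tfrac{1}{n}(\ell(\cdot, s_i) - \ell(\cdot, s_i'))$, so the left-hand side above simplifies to
$$\tfrac{1}{n}\bigl[(\ell(w_{\bs'}, s_i) - \ell(w_\bs, s_i)) + (\ell(w_\bs, s_i') - \ell(w_{\bs'}, s_i'))\bigr],$$
which by $1$-Lipschitzness of each $\ell(\cdot, z)$ is bounded by $\tfrac{2}{n}\|w_\bs - w_{\bs'}\|_2$. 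Combining, $\tfrac{2}{n}\|w_\bs - w_{\bs'}\|_2 \geq \lambda \|w_\bs - w_{\bs'}\|_2^2$, and dividing by $\|w_\bs - w_{\bs'}\|_2$ (trivial if zero) gives $\|w_\bs - w_{\bs'}\|_2 \leq 2/(\lambda n)$ as needed.

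There is no deep obstacle here -- the argument is classical and goes back to \citep{BousquettE02} in the unconstrained case. The only mildly delicate point is that we must use the variational form of first-order optimality over the constrained domain $\K$ rather than the unconstrained $\nabla F_\bs(w_\bs) = 0$, since the minimizer may lie on $\partial \K$; this is automatic from convexity of $\K$ and the fact that $w_{\bs'} \in \K$. The factor of $4$ in the theorem statement is a convenient round-up of the sharper constant $2$ that this argument in fact produces.
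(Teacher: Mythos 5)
The paper does not present a proof of this statement; it is cited verbatim from \citep{ShwartzSSS10} (and ultimately traces back to \citep{BousquettE02}). Your argument is correct, and in fact yields a sharper constant than the one stated. You apply the strong-convexity quadratic-growth lower bound at \emph{both} minimizers and add the two inequalities, which cancels the linear term twice and gives $\lambda\|w_\bs - w_{\bs'}\|^2$ on the right. Combined with the $\tfrac{2}{n}\|w_\bs - w_{\bs'}\|$ upper bound from the single differing coordinate and $1$-Lipschitzness, this yields $\|w_\bs - w_{\bs'}\| \le 2/(\lambda n)$ and hence uniform stability $2/(\lambda n)$. The standard argument in \citep{ShwartzSSS10} uses strong convexity only on one side, getting $\tfrac{\lambda}{2}\|w_\bs - w_{\bs'}\|^2$ instead of $\lambda\|w_\bs - w_{\bs'}\|^2$, which is where the factor $4$ (rather than $2$) originates. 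Your handling of the constrained minimizer via the variational inequality $\langle \nabla F_\bs(w_\bs), w - w_\bs\rangle \ge 0$ for $w \in \K$ is exactly the right thing; this is needed since the optimum may sit on $\partial\K$, and it still delivers $F_\bs(w) \ge F_\bs(w_\bs) + \tfrac{\lambda}{2}\|w - w_\bs\|^2$. The only minor wrinkle in your exposition is the parenthetical claim that the bound $\|w_\bs - w_{\bs'}\| \le 2/(\lambda n)$ ``gives stability $\le 4/(\lambda n)$'' --- it directly gives stability $\le 2/(\lambda n)$, which of course implies the stated $4/(\lambda n)$, and your closing remark makes clear you understand this.
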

As an immediate corollary of this result and Theorem \ref{thm:main-intro} we obtain:
\begin{cor}
\label{cor:strongly-convex}
In the setting of Thm.~\ref{thm:ssss}, there exists a constant $c$ such that for every $\delta >0$:
$$\pr_{\bs\sim \cP^n} \lb F_\cP(w_{\bs}) \geq \min_{w \in \K} F_\cP(w) + \frac{c \log(n) \log(n/\delta)}{\lambda n} + \frac{c \sqrt{\log (1/\delta)}}{\sqrt{n}}\rb \leq \delta .$$
\end{cor}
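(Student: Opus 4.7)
The plan is to decompose the excess loss of the ERM into three pieces and bound each separately, using Theorem~\ref{thm:main-intro} for the generalization gap of the (data-dependent) output and a standard concentration inequality for the fixed optimal point. Fix $w^* \in \argmin_{w \in \K} F_\cP(w)$ and write
$$F_\cP(w_\bs) - F_\cP(w^*) = \lp F_\cP(w_\bs) - F_\bs(w_\bs)\rp + \lp F_\bs(w_\bs) - F_\bs(w^*)\rp + \lp F_\bs(w^*) - F_\cP(w^*)\rp.$$
By optimality of the ERM on $\bs$, the middle term is at most $0$ and can be discarded. It therefore suffices to control the two outer terms with probability at least $1-\delta/2$ each and apply a union bound.

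For the first term, I would define the data-dependent function $M(\bs,z) \doteq \ell(w_\bs, z)$, which maps into $[0,1]$. Theorem~\ref{thm:ssss} gives that $M$ has uniform stability $\gamma = 4/(\lambda n)$. Since $\E_\cP[M(\bs)] = F_\cP(w_\bs)$ and $\cE_\bs[M(\bs)] = F_\bs(w_\bs)$, Theorem~\ref{thm:main-intro} applied to $M$ with confidence parameter $\delta/2$ yields
$$F_\cP(w_\bs) - F_\bs(w_\bs) \,\leq\, \Delta_\bs(M) \,\leq\, c_0 \lp \frac{4 \log(n)\log(2n/\delta)}{\lambda n} + \frac{\sqrt{\log(2/\delta)}}{\sqrt{n}} \rp$$
with probability at least $1-\delta/2$ over $\bs \sim \cP^n$, where $c_0$ is the constant from Theorem~\ref{thm:main-intro}.

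For the third term, $w^*$ is deterministic, so $F_\bs(w^*) = \fr{n}\sum_{i\in [n]} \ell(w^*,s_i)$ is an average of $n$ i.i.d.\ random variables in $[0,1]$ with mean $F_\cP(w^*)$. Hoeffding's inequality gives
$$F_\bs(w^*) - F_\cP(w^*) \,\leq\, \sqrt{\frac{\log(2/\delta)}{2n}}$$
with probability at least $1-\delta/2$. Taking a union bound over the two high-probability events and absorbing all numerical factors into a single constant $c$ yields the claimed bound.

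There is no real technical obstacle here: all the difficulty has been packaged into Theorem~\ref{thm:main-intro}, and the only additional ingredients are the uniform stability of strongly convex ERM (Theorem~\ref{thm:ssss}), the ERM optimality $F_\bs(w_\bs) \leq F_\bs(w^*)$, and a one-line Hoeffding bound for the fixed comparator $w^*$. The only mildly delicate point is to remember to split the confidence budget across the two applications so that the final probability of failure is at most $\delta$.
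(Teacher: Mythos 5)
Your proposal is correct and matches the argument the paper intends: the paper labels Corollary~\ref{cor:strongly-convex} an immediate consequence of Theorem~\ref{thm:ssss} and Theorem~\ref{thm:main-intro}, and the three-term excess-loss decomposition with ERM optimality, Theorem~\ref{thm:main-intro} for the generalization gap, and a concentration bound for the fixed comparator is exactly the route used in the paper's analogous proof of Corollary~\ref{cor:smooth} (where McDiarmid on $\min_{w\in\K}F_\bs(w)$ plays the role your Hoeffding bound on $F_\bs(w^*)$ plays here; the two are interchangeable once the middle term is dropped). No gaps.
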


Theorem ~\ref{thm:ssss} requires strong convexity. As pointed out in \citep{ShwartzSSS10}, it is possible to add a strongly convex regularizing term $\frac{\lambda}{2} \|w\|^2$ to the objective function that has sufficiently small effect on the loss function while ensuring stability (and generalization). Specifically, the objective function will change by at most $\lambda$ since $w$ is assumed to be in a ball of radius $1$. By choosing $\lambda = \frac{\log n}{\sqrt{n}}$, we obtain a excess loss of $O\left( \frac{\log(n/\delta)}{\sqrt{n}}\right)$ (Corollary \ref{cor:convex-general}). This improves on the $O\lp\frac{\sqrt{\log (1/\delta)}}{n^{1/3}}\rp$ bound on excess loss obtained from the results in \citep{FeldmanV:18} by choosing $\lambda = 1/n^{2/3}$. We also remark that it is well-known that the same (up to a constant factor) stability bounds --- and hence generalization bounds -- apply to algorithms that minimize the (regularized) empirical loss within $1/n$. Therefore Corollary \ref{cor:strongly-convex} leads to an efficient algorithm for solving the problem.

\subsubsection{(Deterministic) gradient descent}
We now recall the results of \citet{HardtRS16} for convex and smooth functions. These results derive their guarantees from the fact that a gradient step on a sufficiently smooth loss function is non-expansive. That is, for any pair of points $w$ and $w'$, any $\sigma$-smooth (that is, having a $\sigma$-Lipschitz gradient) convex function $f$, and $0 \leq \eta \leq 2/\sigma$, \equ{\|(w - \eta \nabla f(w)) - (w' - \eta \nabla f(w'))\| \leq   \|w - w'\| .\label{eq:contraction}}
Projection to a convex body is also non-expansive. This implies that uniform stability can be proved for projected gradient descent of the following general form.
For a vector $\bbeta_t = (\eta_{t,1},\ldots,\eta_{t,n})$ a gradient step with rate vector $\bbeta_t$ is the update
\equ{w_{t+1} \leftarrow \proj_\K\lp w_t - \sum_{i\in [n]} \eta_{t+1,i} \nabla \ell(w_t,s_i) \rp ,\label{eq:pgd-step}}
where $\proj_\K$ denotes projection to $\K$. For example, if a batch of size $k$ is used in a gradient step with rate $\eta_t$ then for each point $s_i$ in the batch $\eta_{t,i} = \eta_t/k$ and for each point not in the batch $\eta_{t,i} = 0$.
The non-expansiveness of the gradient steps and projections implies that the effect of each datapoint $s_i$ on the loss of the solution can be bounded by $\sum_t \eta_{t,i} \|\nabla \ell(w_t,s_i) \|_2$. More formally, the following lemma follows directly from eq.~\eqref{eq:contraction} (and is a simple generalization of analysis in \citep{HardtRS16} that only considers updates on a single data sample). We include the proof for completeness.
\begin{lem}
\label{lem:stability-bounded-by-rate}
Let $\K \subseteq \B_2^d(1)$ be a convex body, $\F = \{\ell(\cdot, z) \cond z\in Z\}$ be a family of convex  $1$-Lipschitz and $\sigma$-smooth loss functions over $\K$ with range in $[0,1]$.  For a dataset $\bs$, number of iterations $T$ and a sequence of rate vectors $\bbeta_1,\ldots, \bbeta_T$
 let PGD$(w_0,(\bbeta_t)_{t\in [T]},\bs)$ denote the output of the algorithm that starting from $w_0 \in \K$, performs $T$ updates according to eq.~\eqref{eq:pgd-step} and returns $w_T$. If for every $t\in [T]$, $\eta_t \doteq \|\bbeta_t\|_1 \leq 2/\sigma$ then, for every $w_0$, the algorithm $M(\bs,z)$ that evaluates $\ell(w_T,z)$ on the output $w_T$ of PGD$(w_0,(\bbeta_t)_{t\in [T]},\bs)$ has uniform stability $2 \cdot \left\|(\bbeta)_{t\in [T]}\right\|_{1,\infty}$, where
  $$\left\|(\bbeta)_{t\in [T]}\right\|_{1,\infty} \doteq \max_{i \in [n]} \sum_{t\in [T]} \eta_{t,i} .$$
\end{lem}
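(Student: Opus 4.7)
The plan is to track the distance $\|w_t - w'_t\|$ between the PGD trajectories on two datasets $\bs$ and $\bs'$ that differ in a single coordinate $j$, and then translate a bound on $\|w_T - w'_T\|$ into a uniform stability bound via $1$-Lipschitzness of $\ell(\cdot,z)$. The key observation is that one step of PGD can be rewritten as a gradient step with rate $\eta_t = \|\bbeta_t\|_1$ on the aggregate function $g_t(w) \doteq \sum_{i\in[n]} (\eta_{t,i}/\eta_t) \ell(w,s_i)$, which is a convex combination of $1$-Lipschitz, $\sigma$-smooth convex functions, and hence itself convex, $1$-Lipschitz, and $\sigma$-smooth. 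So each step is of the form $w_{t+1} = \proj_\K(w_t - \eta_t \nabla g_t(w_t))$ with $\eta_t \le 2/\sigma$, to which we can apply the non-expansiveness property~\eqref{eq:contraction}.

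First, I would fix datasets $\bs, \bs'$ differing only at index $j$, and denote the corresponding aggregate functions by $g_t$ and $g'_t$. Observe that $g_t - g'_t = (\eta_{t,j}/\eta_t)(\ell(\cdot, s_j) - \ell(\cdot, s'_j))$, so by $1$-Lipschitzness of each $\ell(\cdot,z)$,
\[
\eta_t \|\nabla g_t(w'_t) - \nabla g'_t(w'_t)\| \le \eta_{t,j}\,\|\nabla \ell(w'_t, s_j) - \nabla \ell(w'_t, s'_j)\| \le 2\eta_{t,j}.
\]
Then I would insert the intermediate point $w'_t - \eta_t \nabla g_t(w'_t)$ and use the triangle inequality together with non-expansiveness of $\proj_\K$:
\[
\|w_{t+1} - w'_{t+1}\| \le \|(w_t - \eta_t \nabla g_t(w_t)) - (w'_t - \eta_t \nabla g_t(w'_t))\| + \eta_t \|\nabla g_t(w'_t) - \nabla g'_t(w'_t)\|.
\]
The first term is bounded by $\|w_t - w'_t\|$ via~\eqref{eq:contraction} applied to the single function $g_t$ (using $\eta_t \le 2/\sigma$), and the second by $2\eta_{t,j}$ by the calculation above.

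Telescoping from $w_0 = w'_0$ gives
\[
\|w_T - w'_T\| \le \sum_{t=1}^T 2\eta_{t,j} \le 2\max_{i\in[n]} \sum_{t=1}^T \eta_{t,i} = 2\left\|(\bbeta_t)_{t\in[T]}\right\|_{1,\infty}.
\]
Finally, for every evaluation point $z$, $1$-Lipschitzness of $\ell(\cdot,z)$ yields $|\ell(w_T,z) - \ell(w'_T,z)| \le \|w_T - w'_T\| \le 2\|(\bbeta_t)_{t\in[T]}\|_{1,\infty}$, which is exactly the required uniform stability bound. There is no real obstacle here — the argument is a direct adaptation of the single-sample analysis in \citep{HardtRS16} to arbitrary rate vectors; the only point to be careful about is writing the aggregate step as a gradient step on the convex combination $g_t$, so that~\eqref{eq:contraction} applies with the combined rate $\eta_t$, rather than attempting to iterate the contraction across individual sample gradients.
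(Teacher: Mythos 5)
Your proof is correct and follows essentially the same route as the paper: rewrite the aggregate update as a single gradient step with rate $\eta_t$ on a combination of the $n$ losses, invoke the non-expansiveness bound~\eqref{eq:contraction}, and telescope the per-step perturbation $2\eta_{t,j}$ coming from the one changed coordinate, finishing with $1$-Lipschitzness. The only (cosmetic) difference is that you apply the contraction to the genuine convex combination $g_t$ over all $n$ samples and isolate the error as $\eta_t\|\nabla g_t(w'_t)-\nabla g'_t(w'_t)\|$, whereas the paper separates the differing index $i^\ast$ out of the sum first and contracts the remaining sub-convex combination; both decompositions yield exactly the same bound, and yours is marginally cleaner since $g_t$ is a true convex combination of members of $\F$.
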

\begin{proof}
Let $\bs$ and $\bs'$ be two datasets that differ in a single element at index $i^\ast$. Let $w_T=\mbox{PGD}(w_0,(\bbeta_t)_{t\in [T]},\bs)$ and $w'_T=\mbox{PGD}(w_0,(\bbeta_t)_{t\in [T]},\bs')$. We prove the following claim by induction on $T$.
$$\|w_T - w'_T\|_2 \leq 2 \sum_{t\in [T]} \eta_{t,i^\ast} .$$
The lemma will then follow from the definition of stability and 1-Lipschitness of $\ell(w_T,z)$.
Clearly, the claim holds for $T=0$.
Now, assume that the claim holds for all $T \leq \tau$.
\alequn{\|w_{\tau+1} - w'_{\tau+1}\|_2 &= \left\| \proj_\K\lp w_\tau  - \sum_{i\in [n]} \eta_{\tau+1,i} \nabla \ell(w_\tau,s_i) \rp - \proj_\K\lp w'_\tau - \sum_{i\in [n]} \eta_{\tau+1,i} \nabla \ell(w'_\tau,s'_i) \rp \right\|_2 \\
& \leq \left\| \lp w_\tau  - \sum_{i\in [n]} \eta_{\tau+1,i} \nabla \ell(w_\tau,s_i) \rp - \lp w'_\tau - \sum_{i\in [n]} \eta_{\tau+1,i} \nabla \ell(w'_\tau,s'_i) \rp \right\|_2  \\  & \leq \left\| \lp w_\tau - \sum_{i\in [n]\setminus\{i^\ast\}} \eta_{\tau+1,i} \nabla \ell(w_\tau,s_i) \rp -\lp w'_\tau - \sum_{i\in [n]\setminus\{i^\ast\}} \eta_{\tau+1,i} \nabla \ell(w'_\tau,s_i) \rp \right\|_2
\\ & \ \ \ + \left\| \eta_{\tau+1,i^\ast} \nabla \ell(w_\tau,s_{i^\ast}) - \eta_{\tau+1,i^\ast} \nabla \ell(w_\tau,s'_{i^\ast}) \right\|_2 \\
& \leq \|w_\tau  -w'_\tau\|_2 + \left\| \eta_{\tau+1,i^\ast} \nabla \ell(w_\tau,s_{i^\ast}) - \eta_{\tau+1,i^\ast} \nabla \ell(w_\tau,s'_{i^\ast}) \right\|_2  \\
& \leq 2 \sum_{t \in [\tau]} \eta_{t,i^\ast}  + 2\eta_{\tau+1,i^\ast} = 2 \sum_{t \in [\tau+1]} \eta_{t,i^\ast},
}
where we used eq.~\eqref{eq:contraction} for gradient step at rate $\eta_{\tau+1}=\|\bbeta_{\tau+1}\|_1$ on the function $$f(w)=\sum_{i\in [n]\setminus\{i^\ast\}}\frac{\eta_{\tau+1,i}}{\eta_{\tau+1}} \ell(w,s_i) $$ to obtain the fifth line. Note that $f$ is a convex combination of functions from $\F$ and therefore is $\sigma$-smooth and by our assumption $\eta_{\tau+1} \leq 2/\sigma$.
\end{proof}

Lemma \ref{lem:stability-bounded-by-rate} together with Theorem \ref{thm:main-intro} immediately implies generalization bounds for a variety of versions of gradient descent with different rates, arbitrary batch sizes and multiple passes over the data. For most such algorithms no alternative analyses of estimation error are known. Importantly, the estimation error can be bounded without any assumptions on how close the output of the algorithm is to the empirical minimum. Therefore this approach gives generalization bounds for algorithms used in practice as opposed to rates and bounds on number of iterations that are necessary for a theoretical proof of convergence (but are rarely used in practice).

As a concrete example we give a corollary for running full gradient descent with standard rates that guarantee convergence to within $1/\sqrt{n}$ of the empirical minimum. We are not aware of any other approaches to proving generalization guarantees for this algorithm in this general setting. Let PGD$(T,\eta,\bs)$ denote PGD$(w_0,(\bbeta_t)_{t\in [T]},\bs)$ for $w_0$ being the origin and $\eta_{t,i} = \eta/n$ for all $i\in [n]$ and $t\in [T]$. Standard analysis of gradient descent on $\sigma$-smooth functions (\eg \cite{Bubeck15}) implies that in the setting of Lemma \ref{lem:stability-bounded-by-rate}, PGD$(T,1/\sigma,\bs)$ outputs $w_{\bs}$ such that
\equ{F_\bs(w_{\bs}) \leq \min_{w \in \K} F_\bs(w) + \frac{2}{\eta T} .\label{eq:smooth-gd}}
By optimizing the choice of $T$, the best previous bound in \citep{FeldmanV:18} gives an upper bound of $O\lp\frac{\sqrt{\log (1/\delta)}}{n^{1/3}}\rp$ on excess loss. Similarly, applying our improved bounds gives the following statement.
\begin{cor}
\label{cor:smooth}
Let $\K \subseteq \B_2^d(1)$ be a convex body, $\F = \{\ell(\cdot, z) \cond z\in Z\}$ be a family of convex  $1$-Lipschitz and $\sigma$-smooth loss functions over $\K$ with range in $[0,1]$. For every distribution $\cP$ over $Z$, $\delta > 0$, $w_{\bs} \doteq \mbox{PGD}(T,\eta,\bs)$ for $\eta = 1/\sigma$ and $T=\lfloor \sigma \sqrt{n}/\log n \rfloor$, and some fixed constant $c$
$$\pr_{\bs\sim \cP^n} \lb F_\cP(w_{\bs}) \geq \min_{w \in \K} F_\cP(w) + \frac{c\log (n/\delta)}{\sqrt{n}}\rb \leq \delta .$$
\end{cor}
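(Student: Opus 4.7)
The plan is to decompose the excess loss in the standard way
$F_\cP(w_\bs) - F_\cP(w^*) = \bigl(F_\cP(w_\bs) - F_\bs(w_\bs)\bigr) + \bigl(F_\bs(w_\bs) - F_\bs(w^*)\bigr) + \bigl(F_\bs(w^*) - F_\cP(w^*)\bigr)$,
where $w^* \doteq \argmin_{w\in \K} F_\cP(w)$, and to show each of the three terms is $O(\log(n/\delta)/\sqrt{n})$ with probability $1-\delta/3$.

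First I would use Lemma~\ref{lem:stability-bounded-by-rate} to bound the uniform stability of the algorithm $M(\bs,z) \doteq \ell(w_\bs,z)$. For the full-gradient variant PGD$(T,\eta,\bs)$ we have $\eta_{t,i} = \eta/n = 1/(\sigma n)$ for all $t \in [T]$, $i \in [n]$, so $\|\bbeta_t\|_1 = 1/\sigma \leq 2/\sigma$ (satisfying the hypothesis of the lemma) and $\|(\bbeta_t)_{t\in[T]}\|_{1,\infty} = T/(\sigma n)$. Plugging in $T = \lfloor \sigma \sqrt{n}/\log n\rfloor$ yields stability $\gamma \leq 2/(\sqrt{n}\,\log n)$. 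Applying Theorem~\ref{thm:main-intro} to $M$ then gives, with probability at least $1 - \delta/3$,
\[ \Delta_\bs(M) \leq c\lp \gamma \log n \log (3n/\delta) + \sqrt{\log(3/\delta)/n}\rp = O\!\lp \frac{\log(n/\delta)}{\sqrt{n}}\rp, \]
which controls the first (generalization) term.

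For the middle (optimization) term I invoke the standard smooth convex analysis cited in eq.~\eqref{eq:smooth-gd}, giving $F_\bs(w_\bs) - F_\bs(w^*) \leq 2/(\eta T) = 2\sigma/\lfloor \sigma\sqrt{n}/\log n \rfloor = O(\log n/\sqrt{n})$ deterministically (for $n$ sufficiently large so the floor is nontrivial; the small-$n$ case is absorbed into the constant $c$). For the third (sampling) term, $F_\bs(w^*)$ is the empirical mean of $n$ i.i.d.\ random variables $\ell(w^*,s_i) \in [0,1]$ with mean $F_\cP(w^*)$, so Hoeffding's inequality (applied to the deterministic point $w^*$, which does not depend on $\bs$) gives $F_\bs(w^*) - F_\cP(w^*) \leq O(\sqrt{\log(1/\delta)/n})$ with probability at least $1-\delta/3$.

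Combining the three bounds by a union bound and the triangle inequality yields $F_\cP(w_\bs) - F_\cP(w^*) \leq O(\log(n/\delta)/\sqrt{n})$ with probability at least $1 - \delta$, which is the claimed inequality (after adjusting $c$). The only nontrivial step is the stability calculation and its interaction with Theorem~\ref{thm:main-intro}: the choice $T = \lfloor \sigma\sqrt{n}/\log n\rfloor$ is precisely calibrated so that the logarithmic overhead in Theorem~\ref{thm:main-intro} cancels the $1/\log n$ factor in $\gamma$, making the estimation error match the $1/\sqrt{n}$ rate of the optimization and sampling terms; any larger $T$ would make the estimation term dominate, and any smaller $T$ would make the optimization term dominate.
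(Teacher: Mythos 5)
Your proposal is correct and follows essentially the same approach as the paper: the same stability computation via Lemma~\ref{lem:stability-bounded-by-rate}, the same three-term decomposition, eq.~\eqref{eq:smooth-gd} for the optimization term, and Theorem~\ref{thm:main-intro} for the estimation term. The only cosmetic difference is in the third (sampling) term: you apply Hoeffding directly to $F_\bs(w^*)$ for the fixed comparator $w^*$, whereas the paper applies McDiarmid's inequality to the random variable $\min_{w\in\K} F_\bs(w)$ (which has sensitivity $1/n$ and expectation at most $F_\cP(w^*)$); both give the same $O(\sqrt{\log(1/\delta)/n})$ rate, and your version is marginally simpler, so this is a matter of taste rather than substance. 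One small omission: the paper explicitly flags the degenerate case $\sigma < \log n / \sqrt{n}$ (where $T = \lfloor \sigma\sqrt{n}/\log n\rfloor = 0$ and the optimization bound $2/(\eta T)$ is undefined), observing that in that regime $w_0$ already has the required guarantee; your parenthetical about "$n$ sufficiently large" does not quite cover this, since $T$ can vanish for all $n$ when $\sigma$ is tiny, so you should either add the case split or argue directly that small $\sigma$ forces a small range of $F_\cP$.
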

\begin{proof}
We first note that, by Lemma \ref{lem:stability-bounded-by-rate}, $\mbox{PGD}(T,\eta,\bs)$ with $\eta = 1/\sigma$ and $T=\lfloor \sigma \sqrt{n}/\log n \rfloor$ is $\frac{2}{\sqrt{n} \log n}$ uniformly stable (here we can assume that $\sigma \geq \log n/\sqrt{n}$ since otherwise $T=0$ and $w_0$ has the desired property since the range of every loss function is within $\log n/\sqrt{n}$ of some constant).

We next denote $w^\ast \doteq \argmin_{w \in \K} F_\cP(w)$ and use the following standard decomposition of excess loss:
$$F_\cP(w_\bs) - F_\cP(w^\ast) \leq |F_\cP(w_\bs) - F_\bs(w_\bs)| + \left|F_\bs(w_\bs) -  \min_{w \in \K} F_\bs(w)\right| +  \min_{w \in \K} F_\bs(w)  - F_\cP(w^\ast).$$
Theorem \ref{thm:main-intro} gives an upper-bound of $\frac{c_0\log (n/\delta)}{\sqrt{n}}$ (for some constant $c_0$) on the first term that holds with probability $1-\delta/2$. Equation \eqref{eq:smooth-gd} upper bounds the second term by $\frac{4 \log n}{\sqrt{n}}$ (we use an additional factor of 2 to account for the $\lfloor \cdot \rfloor$ operation). Finally, $\min_{w \in \K} F_\bs(w)$ has sensitivity of $1/n$ and $$\E_{\bs \sim \cP^n} \lb  \min_{w \in \K} F_\bs(w) \rb \leq \E_{\bs \sim \cP^n} \lb  F_\bs(w^\ast) \rb = F_\cP(w^\ast) .$$ Therefore, by McDiarmid's inequality (Lemma \ref{lem:mcdiarmid}),
$$\pr_{\bs \sim \cP^n}\lb \min_{w \in \K} F_\bs(w)  - F_\cP(w^\ast) \geq \frac{\sqrt{2\ln(2/\delta)}}{\sqrt{n}} \rb \leq \delta/2 .$$
Combining the upper bounds on the three terms and using the union bound we obtain the claim.
\end{proof}

\subsubsection{Stochastic gradient descent}
The analysis above applies to gradient descent with the rates chosen deterministically. In practice, a variety of randomized strategies for picking the batches are used with the most common ones being random shuffling and random sampling with replacement. We describe a strategy for picking which samples to use by a distribution $\cU$ over sequences of rate vectors that result from this strategy. We also denote by PSGD$(w_0,\cU,\bs)$ the corresponding stochastic gradient descent algorithm: sample $(\bbeta_t)_{t\in [T]}$ from $\cU$ and run PGD$(w_0,(\bbeta_t)_{t\in [T]},\bs)$.

A simple way to obtain generalization bounds for stochastic gradient descent is to bound the stability of the expectation (over the choice of batches) of the loss function. One can directly upper-bound it by the expectation of the uniform stability parameter $2 \E_\cU\lb \left\|  (\bbeta_t)_{t\in [T]} \right\|_{1,\infty}\rb$. For (multi-pass) sampling without replacement that is symmetric with respect to the samples, this immediately gives
$$\E_\cU\lb \left\|  (\bbeta_t)_{t\in [T]} \right\|_{1,\infty}\rb = \fr{n} \sum_{t\in [T]} \eta_t,$$ where $\eta_t$ is the rate of the batch used at step $t$. For sampling of batches with replacement, a direct application of this bound will not give the same result due to the (likely) repetition of samples. However, in this case the randomness in each iteration is independent and therefore one can take the expectation in every step of the induction in Lemma \ref{lem:stability-bounded-by-rate}. This leads to the same bound on the uniform stability parameter of the expected loss:
$$2 \max_{i \in [n]} \sum_{t\in [T]} \E_\cU[ \eta_{t,i}] = 2 \fr{n} \sum_{t\in [T]} \eta_t .$$ Combining this simple analysis with Theorem \ref{thm:main-intro}, one gets generalization with high probability over the dataset but in expectation over the sampling of batches.

To obtain generalization bounds that also hold with high probability over the sampling of the batches, we observe that for most common sampling schemes, $\left\|(\bbeta_t)_{t\in [T]}\right\|_{1,\infty}$ is highly concentrated around its expectation. In particular, the norm can be upper-bounded with high probability with relatively low overhead. More formally, we state the following general form of bounds on the estimation error of PSGD.
\begin{thm}
\label{thm:psgd-general}
Let $\K \subseteq \B_2^d(1)$ be a convex body, $\F = \{\ell(\cdot, z) \cond z\in Z\}$ be a family of convex  $1$-Lipschitz and $\sigma$-smooth loss functions over $\K$ with range in $[0,1]$.  For a number of iterations $T$ let $\cU$ be a distribution over sequences of rate vectors of length $T$ and assume that for every $(\bbeta_t)_{t\in [T]}$ in the support $\cU$, $\|\bbeta_t \|_1 \leq 2/\sigma$ for all $t\in [T]$.

Assume that for some $\beta \geq 0$, $$\pr_{(\bbeta_t)_{t\in [T]} \sim \cU}  \lb \left\|(\bbeta_t)_{t\in [T]}\right\|_{1,\infty} \geq \zeta \rb \leq \beta .$$
Then there exist a constant $c$ such that for every distribution $\cP$ over $Z$ and $w_0 \in \K$,
$$\pr_{\bs \sim \cP^n,\ w_{\bs,T} = \mbox{PSGD}(w_0,\cU,\bs)} \lb  \left| F_\cP(w_{\bs,T}) -  F_\bs(w_{\bs,T}) \right| \geq c\lp \zeta \log(n) \log (n/\delta)   + \frac{\sqrt{\log (1/\delta)}}{\sqrt{n}} \rp \rb \leq \beta + \delta .$$
\end{thm}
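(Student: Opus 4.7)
The argument splits on the randomness of the sampling scheme $\cU$, which is independent of the sample $\bs$. Define the ``good'' event $A \doteq \{(\bbeta_t)_{t\in[T]} : \|(\bbeta_t)_{t\in[T]}\|_{1,\infty} \leq \zeta\}$, which by hypothesis satisfies $\pr_{\cU}[A] \geq 1-\beta$. The plan is: conditional on a fixed realization $(\bbeta_t)_{t\in[T]} \in A$, invoke Theorem~\ref{thm:main-intro} for the deterministic algorithm it defines; then absorb the complementary event into the total failure probability.

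Fix any $(\bbeta_t)_{t\in[T]} \in A$. Then PSGD reduces to the deterministic algorithm PGD$(w_0,(\bbeta_t)_{t\in[T]},\bs)$, whose output we denote $w_{\bs,T}^{(\bbeta)}$. The hypothesis $\|\bbeta_t\|_1 \leq 2/\sigma$ is in force, so Lemma~\ref{lem:stability-bounded-by-rate} applies and the data-dependent function $M_{(\bbeta)}(\bs,z) \doteq \ell(w_{\bs,T}^{(\bbeta)},z)$ has uniform stability $2\|(\bbeta_t)_{t\in[T]}\|_{1,\infty} \leq 2\zeta$. Since the range of $\ell$ lies in $[0,1]$, Theorem~\ref{thm:main-intro} yields a constant $c_0$ such that
$$\pr_{\bs \sim \cP^n}\left[\left|F_\cP(w_{\bs,T}^{(\bbeta)}) - F_\bs(w_{\bs,T}^{(\bbeta)})\right| \geq c_0\left(2\zeta \log(n)\log(n/\delta) + \frac{\sqrt{\log(1/\delta)}}{\sqrt{n}}\right)\right] \leq \delta.$$
Crucially, this bound holds uniformly for every realization $(\bbeta_t)_{t\in[T]} \in A$, with the same threshold on the right-hand side.

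To combine, let $B$ denote the event (over the joint space of $\bs$ and $\cU$) that the generalization gap exceeds $c(\zeta\log(n)\log(n/\delta) + \sqrt{\log(1/\delta)/n})$ for $c \doteq 2c_0$. Decomposing on the independent randomness of $\cU$,
$$\pr[B] \leq \pr_{\cU}[\bar A] + \E_{(\bbeta_t) \sim \cU}\left[\ind_{A}((\bbeta_t)) \cdot \pr_{\bs}[B \mid (\bbeta_t)]\right] \leq \beta + \delta,$$
where the first term bounds the contribution of ``bad'' sampling and the second uses the uniform bound $\delta$ established above for every $(\bbeta_t) \in A$. This gives the stated conclusion.

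The proof is essentially a bookkeeping argument: all substantive work is done by Lemma~\ref{lem:stability-bounded-by-rate} (which converts a bound on $\|(\bbeta_t)\|_{1,\infty}$ into a uniform-stability bound for each fixed realization) and by Theorem~\ref{thm:main-intro} (which converts uniform stability into a high-probability generalization bound). The only subtle point is exploiting the independence of the sampling randomness from $\bs$ so that Theorem~\ref{thm:main-intro} can be applied conditionally on each realization $(\bbeta_t) \in A$; no obstacle of substance arises.
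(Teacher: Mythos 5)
Your proof is correct and is exactly the argument the paper intends (the paper leaves the proof of Theorem~\ref{thm:psgd-general} implicit, as an immediate consequence of Lemma~\ref{lem:stability-bounded-by-rate} and Theorem~\ref{thm:main-intro} combined with conditioning on the sampling randomness). The conditioning-on-the-good-event decomposition and the use of independence between $\cU$ and $\bs$ are precisely the points the paper gestures at in the surrounding discussion.
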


\paragraph{Random shuffling:} We first consider random shuffling based schemes (also referred to as sampling without replacement). In such schemes the dataset is split into batches randomly and uniformly. All the batches are used to update the gradient in every pass over the dataset. For every pass over the data each of the samples is used exactly once. Hence the contribution of each pass over the data to the stability parameter is upper bounded by the largest rate used in that pass. Specifically, if the batch size is $k$, $r$ passed are performed, and in each pass $i$ the largest rate used for a batch is $\eta_i$, then for $\zeta = \fr{k} \sum_{i\in [r]} \eta_i$, $\left\|(\bbeta_t)_{t\in [T]}\right\|_{1,\infty}$ is upper bounded by $\zeta$ with probability 1. In particular, Theorem \ref{thm:psgd-general} can be applied with $\zeta$ defined as above and $\beta =0$. While our results give a bound on the estimation error, unfortunately very little is known about the empirical error of gradient descent with random shuffling. In particular, known results for random shuffling are in more restrictive settings \citep{RechtR12,gurbuzbalaban2015random,Shamir16,LinR16,PillaudRB18} and in most cases only apply to function classes simple enough that one can appeal to complexity-based generalization bounds instead of stability.

\paragraph{Sampling with replacement:} Another sampling scheme (more common in theoretical analyses than in practice) uses random and independent sampling with replacement: that is for every iteration a batch of $k$ samples is chosen randomly, uniformly and independently of previous batches. For each of the $T$ iterations, sample $s_i$ is included with probability $k/n$ and therefore the sum of rates for sample $i$ is distributed as $\fr{k}\sum_{t\in [T]} \eta_t B(k/n)$, where $B(k/n)$ is the Bernoulli random variable with bias $k/n$. We can now use standard concentration inequalities and the union bound to upper bound the largest sum of rates. For example, if $T = O(n/k)$ (which corresponds to a constant number of passes) and the rate is fixed to $\eta$ then, by the (multiplicative) Chernoff bound, for some constant $c_0$,
\equ{\pr_{(\bbeta_t)_{t\in [T]} \sim \cU}  \lb \left\|(\bbeta_t)_{t\in [T]}\right\|_{1,\infty} \geq \frac{c_0 \eta \log(n/\beta)}{k} \rb \leq \beta .\label{eq:high-prob-rate}}
Hence even with a constant number of passes and batches of size 1 the overhead of getting generalization with high probability over the randomness of the algorithm is at most logarithmic. The overhead becomes (relatively) smaller as the number of passes grows.

As a concrete corollary, we give high-probability generalization bounds for PSGD  with $k=1$ and fixed rate $\eta = 1/\sqrt{T}$. We denote this rate distribution by $\cU_{1,T}$ and denote the origin in $\R^d$ by $\bar 0$ (and thus the algorithm can is exactly $\mbox{PSGD}(\bar 0,\cU_{1,T},\bs)$).
To get a high-probability bound on the empirical loss of this algorithm, we note that sampling with replacement corresponds to drawing i.i.d.~samples from the uniform distribution over the samples in $\bs$. In particular, the expected loss function in this case is exactly $F_\bs$. Standard high-probability generalization bounds for PSGD imply that it  minimizes the empirical loss with high-probability but require outputting the average of the iterates (these results are obtained via online-to-batch conversion \citep{Cesa-BianchiCG04}). Theorem \ref{thm:psgd-general} applies to the average of the iterates since Lemma \ref{lem:stability-bounded-by-rate} applies to it as well (or any other convex combination of the iterates). To get an upper-bound for $\mbox{PSGD}(\bar 0,\cU_{1,T},\bs)$ (which outputs the last iterate) we use a recent work of \citet{HarvetLPR18} that shows high-probability bound on suboptimality of the last iterate of PSGD\footnote{The results stated there are for the decaying rate $\eta_t = 1/\sqrt{t}$, but the same result applies to the fixed rate we use here \citep{Harvey18:PC}.} with a slightly worse rate.
\begin{lem}\citep{HarvetLPR18,Harvey18:PC}
\label{lem:psgd-emp}
Let $\K \subseteq \B_2^d(1)$ be a convex body, $\F = \{\ell(\cdot, z) \cond z\in Z\}$ be a family of convex  $1$-Lipschitz  loss functions over $\K$ with range in $[0,1]$. There exists a constant $c$ such that for every $\bs \in Z^n$ and $\delta >0$,
$$\pr_{w_T = \mbox{PSGD}(\bar 0,\cU_{1,T},\bs)} \lb F_\bs(w_{T}) \geq \min_{w \in \K} F_\bs(w) + \frac{c \log (T) \log (1/\delta)}{\sqrt{T}}\rb \leq \delta .$$
\end{lem}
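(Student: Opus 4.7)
The plan is to adapt the suffix-averaging technique of Harvey, Liaw, Plan and Randhawa. The key obstruction is that the classical $O(1/\sqrt{T})$ regret bound applies to the averaged iterate, not the last iterate; accordingly, one expresses $w_T$ in terms of suffix averages at geometrically decreasing scales and then concentrates each suffix via a martingale inequality.

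First, for $k = 0, 1, \ldots, K = \lceil \log_2 T \rceil$ let $m_k = \lfloor T/2^k \rfloor$ and let $\bar w_k$ denote the average of the last $m_k$ iterates (so $\bar w_K = w_T$ and $\bar w_0$ is the full Polyak average). By recursively telescoping the differences $F_\bs(\bar w_k) - F_\bs(\bar w_{k-1})$ and using convexity of $F_\bs$ together with the fact that each $\bar w_{k-1}$ is a convex combination of $\bar w_k$ and the iterates preceding the window of $\bar w_k$, one obtains an inequality of the form
\begin{equation*}
F_\bs(w_T) - \min_{w \in \K} F_\bs(w) \leq c_0 \log T \cdot \max_{0 \leq k \leq K} \bigl( F_\bs(\bar w_k) - \min_{w \in \K} F_\bs(w) \bigr) + \frac{c_0 \log T}{\sqrt{T}}.
\end{equation*}
Thus it suffices to establish a high-probability suboptimality bound for each of the $O(\log T)$ suffix averages individually.

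Second, for each suffix I would invoke the standard SGD regret decomposition. Since $g_t := \nabla \ell(w_t, s_{i_t})$ with $i_t$ drawn uniformly from $[n]$ satisfies $\E[g_t \mid \mathcal F_{t-1}] = \nabla F_\bs(w_t)$, a potential-function argument based on $\|w_t - w^*\|^2$, with $w^* := \argmin_{w\in\K} F_\bs(w)$, yields
\begin{equation*}
F_\bs(\bar w_k) - \min_{w \in \K} F_\bs(w) \leq \frac{1}{m_k \eta} + \eta + \frac{1}{m_k} \sum_{t=T-m_k+1}^{T} X_t, \qquad X_t := \langle g_t - \nabla F_\bs(w_t), w_t - w^* \rangle.
\end{equation*}
The deterministic part contributes $O(2^{k/2}/\sqrt{T})$ with $\eta = 1/\sqrt{T}$, and the sequence $(X_t)$ is a martingale difference sequence bounded by $2$ (using $1$-Lipschitzness of $\ell$ and $\K \subseteq \B_2^d(1)$) with predictable quadratic variation $O(m_k)$. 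Freedman's inequality then gives the normalized tail bound $\frac{1}{m_k}|\sum_t X_t| \leq c_1 \bigl( \sqrt{\log(1/\delta')/m_k} + \log(1/\delta')/m_k \bigr)$ with probability at least $1 - \delta'$.

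Finally, taking $\delta' = \delta/K$ and union-bounding over the $K = O(\log T)$ suffixes, the per-suffix deviation becomes $O(\sqrt{2^k \log(\log T/\delta)/T})$; plugging into the step-one inequality produces the claimed rate $O(\log(T) \log(1/\delta)/\sqrt{T})$. The principal obstacle is the step-one decomposition itself: expressing the last iterate's suboptimality as a $\log T$-factor blow-up of a \emph{maximum} (rather than a sum that would grow polynomially) of suffix-average suboptimalities, in a way that does not over-count the variance proxy across scales. Harvey et al.\ accomplish this by a delicate recursive ``suffix-of-suffix'' construction which carefully interleaves convexity of $F_\bs$ with the triangle inequality on the iterates; once this decomposition is in hand, the remaining Freedman-plus-union-bound argument is routine.
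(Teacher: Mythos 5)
The paper does not prove this lemma at all: it is cited directly from Harvey, Liaw, Plan and Randhawa, with a footnote (citing personal communication) noting that their analysis for the decaying rate $\eta_t = 1/\sqrt{t}$ extends to the fixed rate used here. So there is no ``paper proof'' to compare against; what can be assessed is whether your reconstruction is internally sound.

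There is a genuine gap in your step~1. The inequality
$F_\bs(w_T) - \min F_\bs \leq c_0 \log T \cdot \max_k\bigl(F_\bs(\bar w_k) - \min F_\bs\bigr) + c_0 \log T/\sqrt{T}$
is either vacuous or not obtainable by the route you describe. If the maximum ranges over all $k \leq K$, it includes $\bar w_K = w_T$, and the inequality reads $X \leq c_0 (\log T) X + \cdots$, which holds trivially and is useless. If instead the maximum ranges over the proper suffix averages $k < K$, then the inequality cannot follow from convexity alone: convexity of $F_\bs$ and the relation $\bar w_{k-1} = \alpha_k \bar w_k + (1-\alpha_k)\bar v_k$ give an \emph{upper} bound on $F_\bs(\bar w_{k-1})$, which is the wrong direction for telescoping up to $F_\bs(\bar w_K)$. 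More fundamentally, the per-suffix Freedman step you propose in step~2 bounds $F_\bs(\bar w_k) - F_\bs(w^*)$ against the fixed comparator $w^*$, and the deterministic potential term there is $\Theta\bigl(1/(\eta\, m_k)\bigr) = \Theta(2^k / \sqrt{T})$ with $\eta = 1/\sqrt{T}$. For $k$ close to $K = \lceil \log_2 T\rceil$ this is $\Theta(\sqrt{T})$, far above the target $O(\log T/\sqrt T)$. So a union bound over the suffix-average suboptimalities cannot yield the claimed rate.

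What Shamir--Zhang and Harvey et al.\ actually do is bound the \emph{telescoping differences} $F_\bs(\bar w_k) - F_\bs(\bar w_{k-1})$ directly, using the regret inequality on the window $W_{k-1}$ with a comparator chosen inside the window (not $w^*$), so that the potential terms shrink with the window size and the total over all $O(\log T)$ scales is $O(\log T/\sqrt T)$. The high-probability version is then not a vanilla Freedman plus union bound: the martingale differences arising at different scales share randomness and depend on the varying comparators, which is precisely why Harvey et al.\ introduce a generalized Freedman/Chung--Lu-type inequality tailored to that structure. You correctly flag step~1 as the ``principal obstacle,'' but the specific inequality you assert as its output is not one that the suffix-of-suffix construction produces, and the argument you hang on it does not close.
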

Combining Lemma \ref{lem:psgd-emp} with Theorem \eqref{thm:psgd-general} (used with eq.~\ref{eq:high-prob-rate}), we obtain the following bound on the generalization error of PSGD$(\bar 0,\cU_{1,T},\bs)$ for $T=n$.
\begin{cor}
\label{cor:psgd-resample}
Let $\K \subseteq \B_2^d(1)$ be a convex body, $\F = \{\ell(\cdot, z) \cond z\in Z\}$ be a family of convex  $1$-Lipschitz $2\sqrt{n}$-smooth loss functions over $\K$ with range in $[0,1]$. There exists a constant $c$ such that for every distribution $\cP$ over $Z$ and $\delta >0$,
$$\pr_{\bs\sim \cP^n,\ w_n = \mbox{PSGD}(\bar 0,\cU_{1,n},\bs)} \lb F_\cP(w_{T}) \geq \min_{w \in \K} F_\cP(w) + \frac{c \log(n) \log^2 (n/\delta)}{\sqrt{n}}\rb \leq \delta .$$
\end{cor}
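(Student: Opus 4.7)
The plan is to use the standard three-term decomposition of the excess loss with $w^\ast \doteq \argmin_{w\in\K} F_\cP(w)$:
\[
F_\cP(w_n) - F_\cP(w^\ast) \leq \underbrace{|F_\cP(w_n) - F_\bs(w_n)|}_{\text{generalization gap}} + \underbrace{|F_\bs(w_n) - \min_{w\in\K} F_\bs(w)|}_{\text{optimization error}} + \underbrace{\min_{w\in\K} F_\bs(w) - F_\cP(w^\ast)}_{\text{sampling error}}.
\]
I would bound each of the three terms separately by $O(\log(n)\log^2(n/\delta)/\sqrt{n})$ (with failure probability at most $\delta/3$ each) and conclude by the union bound. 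The randomness over $\bs \sim \cP^n$ is relevant for the generalization and sampling terms, while the randomness over $\cU_{1,n}$ is relevant for the generalization and optimization terms.

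For the \emph{generalization gap}, I would invoke Theorem~\ref{thm:psgd-general}. First I check the hypothesis $\|\bbeta_t\|_1 \leq 2/\sigma$: since $\eta=1/\sqrt{n}$, $k=1$, and $\sigma = 2\sqrt{n}$, we have $\|\bbeta_t\|_1 = 1/\sqrt{n} = 2/\sigma$, so the non-expansiveness condition is met. Next, by equation~\eqref{eq:high-prob-rate} applied with $T=n$, $k=1$, $\eta=1/\sqrt{n}$, there is a constant $c_0$ such that with probability at least $1-\delta/6$ over $\cU_{1,n}$,
\[
\left\|(\bbeta_t)_{t\in[n]}\right\|_{1,\infty} \leq \frac{c_0 \log(n/\delta)}{\sqrt{n}} \doteq \zeta.
\]
Plugging this $\zeta$ into Theorem~\ref{thm:psgd-general} with failure parameter $\delta/6$ gives $|F_\cP(w_n)-F_\bs(w_n)| = O(\zeta \log(n)\log(n/\delta) + \sqrt{\log(1/\delta)/n}) = O(\log(n)\log^2(n/\delta)/\sqrt{n})$ with probability at least $1-\delta/3$ over $(\bs,\cU_{1,n})$.

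For the \emph{optimization error}, I apply Lemma~\ref{lem:psgd-emp} with $T=n$ and failure parameter $\delta/3$: for every fixed $\bs$, with probability at least $1-\delta/3$ over $\cU_{1,n}$, $F_\bs(w_n) - \min_{w\in\K}F_\bs(w) \leq c \log(n)\log(3/\delta)/\sqrt{n}$. Since this holds for every $\bs$, it holds jointly over $(\bs,\cU_{1,n})$ as well. For the \emph{sampling error}, I note that $\min_{w\in\K} F_\bs(w) \leq F_\bs(w^\ast)$ is a function of $\bs$ of sensitivity at most $1/n$, and $\E_{\bs\sim\cP^n}[\min_{w\in\K} F_\bs(w)] \leq \E_{\bs\sim\cP^n}[F_\bs(w^\ast)] = F_\cP(w^\ast)$, so McDiarmid's inequality (Lemma~\ref{lem:mcdiarmid}) gives the bound $\sqrt{2\ln(3/\delta)/n}$ with probability at least $1-\delta/3$, exactly as in the proof of Corollary~\ref{cor:smooth}. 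A final union bound combines the three estimates and yields the claimed bound with a single absorbed constant $c$.

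The step most likely to need care is the generalization term: verifying that the high-probability rate bound from~\eqref{eq:high-prob-rate} (Chernoff over $n$ Bernoulli$(1/n)$ trials for each coordinate with a union bound over $i\in[n]$) is compatible with the smoothness-induced constraint $\eta \leq 2/\sigma$ and then correctly tracking the joint probability over $\bs$ and $\cU_{1,n}$ when feeding $\zeta$ into Theorem~\ref{thm:psgd-general}. The other two terms are routine applications of the cited results.
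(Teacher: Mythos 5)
Your proof is correct and follows essentially the same route as the paper, which simply states that Corollary~\ref{cor:psgd-resample} follows by combining Lemma~\ref{lem:psgd-emp} with Theorem~\ref{thm:psgd-general} via eq.~\eqref{eq:high-prob-rate}. You correctly fill in the details the paper leaves implicit: the three-term excess-loss decomposition and the McDiarmid bound on the sampling term (both borrowed from the proof of Corollary~\ref{cor:smooth}), the verification that $\eta=1/\sqrt{n}=2/\sigma$, the Chernoff bound giving $\zeta = O(\log(n/\delta)/\sqrt{n})$, and the union bound accounting.
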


\begin{rem}
\label{rem:smoothing}
Finally, we note that the results in this section can be extended to non-smooth functions by applying a smoothing operation to each convex loss function before optimization. A variety of approaches to smoothing are known (\eg \citep{beck2012smoothing}). For our purposes it suffices to observe that for every convex $1$-Lipschitz function $f$ over $\K$ of radius 1, the standard smoothing via Moreau envelope can be used to obtain a $\sigma$-smooth 1-Lipschitz function $\tilde f$ such that $|\tilde{f}(w) - f(w)| \leq 1/(2\sigma)$ for all $w \in \K$. Thus we can apply the optimization to $\sqrt{n}$-smooth loss functions that are within $1/\sqrt{n}$ (in $L_\infty$ norm) of the corresponding functions in $\F$. Note that this level of smoothness and additional error suffice to extend Corollary \ref{cor:smooth} (with $T = n/\log n$) and Corollary \ref{cor:psgd-resample} to non-smooth functions with essentially the same bound on the excess loss.
\end{rem}

%By using $\lambda = 1/\sqrt{dn}$ we will obtain $\tilde F$ that is within $1/\sqrt{n}$ of $F_\bs$ and is $\sqrt{dn}$-smooth.

% (\eg \citep{FeldmanMTT18})
\remove{
Note that Corollary \ref{cor:smooth} applies to $\F$ with an arbitrary finite bound on the smoothness parameter $\sigma$. However the number of iterations grows linearly with $\sigma$. To ensure an upper bound on the running time and also to handle the non-smooth functions one can always apply a smoothing operation to $F_\bs$. For example, the standard smoothing via convolution with a Gaussian kernel of variance $d \lambda^2$  gives a $1/\lambda$-smooth function $\tilde F$ such that $|\tilde{F}(w) - F_\bs(w)| \leq \lambda \sqrt{d}$ for all $w \in \K$ (\eg \citep{FeldmanMTT18}). By using $\lambda = 1/\sqrt{dn}$ we will obtain $\tilde F$ that is within $1/\sqrt{n}$ of $F_\bs$ and is $\sqrt{dn}$-smooth. Hence for non-smooth loss functions we obtain the following corollary.
\begin{cor}
\label{cor:non-smooth}
Let $\K \subseteq \B_2^d(1)$ be a convex body, $\F = \{\ell(\cdot, z) \cond z\in Z\}$ be a family of convex  $1$-Lipschitz loss functions over $\K$ with range in $[0,1]$. Let $\mbox{PGD}_\lambda(T,\eta,\bs)$ denote the execution of $\mbox{PGD}(T,\eta,\bs)$ on $F_\bs$ smoothed via a convolution with Gaussian kernel $N(0,\lambda^2 \mathbb{I}_d)$. For every distribution $\cP$ over $Z$, $\delta > 0$, and $w_{\bs} \doteq \mbox{PGD}_\lambda(T,\eta,\bs)$ for $\lambda = \eta =1/\sqrt{dn}$, $T=\lfloor n \sqrt{d}/\log n \rfloor$, and some fixed constant $c$
$$\pr_{\bs\sim \cP^n} \lb F_\cP(w_{\bs}) \geq \min_{w \in \K} F_\cP(w) + \frac{c\log (n/\delta)}{\sqrt{n}}\rb \leq \delta .$$
\end{cor}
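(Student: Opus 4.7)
The plan is to reduce Corollary \ref{cor:non-smooth} to the smooth-case result in Corollary \ref{cor:smooth} by passing to a smoothed surrogate loss family, applying the smooth result there, and translating the resulting excess-loss bound back to the original non-smooth problem using the uniform approximation guarantee of Gaussian smoothing. Concretely, with $\lambda = 1/\sqrt{dn}$ I would define $\tilde\ell(w,z) \doteq \E_{g \sim N(0,\lambda^2 \mathbb{I}_d)}[\ell(w+g,z)]$, after first extending each $\ell(\cdot,z)$ from $\K$ to all of $\R^d$ as a $1$-Lipschitz convex function via the standard McShane extension. The smoothed empirical and population losses are $\tilde F_\bs(w) = \fr{n}\sum_i \tilde\ell(w,s_i)$ and $\tilde F_\cP(w) = \E_{z\sim\cP}[\tilde\ell(w,z)]$; by linearity of expectation $\tilde F_\bs$ equals the Gaussian convolution of $F_\bs$, so $\mbox{PGD}_\lambda(T,\eta,\bs)$ is exactly $\mbox{PGD}(T,\eta,\bs)$ applied to the new family $\tilde\F = \{\tilde\ell(\cdot,z) \cond z\in Z\}$.

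Next I would record the three standard properties of the Gaussian convolution of a $1$-Lipschitz convex $\ell(\cdot,z)$: $\tilde\ell(\cdot,z)$ is convex (as an expectation of convex functions), $1$-Lipschitz (Lipschitzness is preserved under convolution), and $1/\lambda$-smooth, via the Stein-style identity $\nabla\tilde\ell(w,z) = \E_g[\ell(w+g,z)\, g/\lambda^2]$ and the resulting bound on the Hessian operator norm (as used in \citep{FeldmanMTT18}). In addition, $|\tilde\ell(w,z) - \ell(w,z)| \leq \E_g[\|g\|_2] \leq \sqrt{\E_g[\|g\|_2^2]} = \lambda\sqrt{d} = 1/\sqrt{n}$, so $\|\tilde F_\cP - F_\cP\|_\infty \leq 1/\sqrt{n}$ on $\K$, and the range of each $\tilde\ell$ lies in $[-1/\sqrt{n},\,1+1/\sqrt{n}]$, which can be absorbed into the universal constant by an affine rescaling of the family.

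With these properties in hand, Corollary \ref{cor:smooth} applies to $\tilde\F$ with smoothness parameter $\sigma = 1/\lambda = \sqrt{dn}$; its prescribed step size $\eta = 1/\sigma = 1/\sqrt{dn}$ and iteration count $T = \lfloor \sigma\sqrt{n}/\log n\rfloor = \lfloor n\sqrt{d}/\log n\rfloor$ are exactly those in the statement of Corollary \ref{cor:non-smooth}. Letting $w^\ast \doteq \argmin_{w\in\K} F_\cP(w)$ and $w_\bs \doteq \mbox{PGD}_\lambda(T,\eta,\bs)$, Corollary \ref{cor:smooth} yields that with probability at least $1-\delta$, $\tilde F_\cP(w_\bs) \leq \min_{w\in\K} \tilde F_\cP(w) + c_0 \log(n/\delta)/\sqrt{n}$ for some universal constant $c_0$. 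Applying $\|\tilde F_\cP - F_\cP\|_\infty \leq 1/\sqrt{n}$ at $w_\bs$ and at $w^\ast$ then gives $F_\cP(w_\bs) \leq \tilde F_\cP(w_\bs) + 1/\sqrt{n} \leq \tilde F_\cP(w^\ast) + c_0\log(n/\delta)/\sqrt{n} + 1/\sqrt{n} \leq F_\cP(w^\ast) + (c_0+2)\log(n/\delta)/\sqrt{n}$, which is the claimed bound with $c = c_0 + 2$.

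The only genuinely technical step is the smoothness bound $\sigma \leq 1/\lambda$ for Gaussian convolution of a $1$-Lipschitz function, which although classical requires a short calculation via the Stein identity; I would quote this from \citep{FeldmanMTT18} rather than reprove it. Beyond that, the argument is a clean bookkeeping reduction: the smoothing scale $\lambda = 1/\sqrt{dn}$ is chosen precisely so that the uniform smoothing error $\lambda\sqrt{d}$ matches the $1/\sqrt{n}$ generalization rate of Corollary \ref{cor:smooth}, while the induced smoothness $1/\lambda = \sqrt{dn}$ is just large enough that the matching iteration count $T = \lfloor n\sqrt{d}/\log n\rfloor$ drives the $2/(\eta T)$ empirical optimization error in eq.~\eqref{eq:smooth-gd} down to $O(\log n/\sqrt{n})$.
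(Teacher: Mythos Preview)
Your proposal is correct and follows essentially the same route as the paper. The paper's own argument for Corollary~\ref{cor:non-smooth} is the short paragraph immediately preceding it: Gaussian convolution at scale $\lambda$ yields a $1/\lambda$-smooth surrogate within $\lambda\sqrt{d}$ of the original (citing \citep{FeldmanMTT18}), so with $\lambda=1/\sqrt{dn}$ one gets a $\sqrt{dn}$-smooth problem within $1/\sqrt{n}$ of the original and invokes Corollary~\ref{cor:smooth} with $\sigma=\sqrt{dn}$, $\eta=1/\sigma$, $T=\lfloor \sigma\sqrt{n}/\log n\rfloor$; your write-up is simply a more carefully spelled-out version of this, correctly noting that the per-sample smoothing $\tilde\ell(\cdot,z)$ is what is needed to match the hypotheses of Corollary~\ref{cor:smooth} and that by linearity this coincides with smoothing $F_\bs$.
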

}

\subsection{Privacy-Preserving Prediction}
\label{sec:dp-api}
Our results can also be used to improve the bounds on generalization error of learning algorithms with differentially private prediction. These are algorithms introduced to model  privacy-preserving learning in the settings where users only have black-box access to the model via a prediction interface \citep{DworkFeldman18}.
Formally,
\begin{defn}[\citep{DworkFeldman18}]
\label{def:private-prediction}
Let $K$ be an algorithm that given a dataset $\bs\in (X\times Y)^n$ and a point $x\in X$ produces a value in $Y$. Then  $K$ is {\em  $\eps$-differentially private prediction} algorithm if for every $x \in X$, the output $K(\bs,x)$ is $\eps$-differentially private with respect to $\bs$.
\end{defn}
The properties of differential privacy imply that the expectation over the randomness of $K$ of the loss of $K$ at any point is uniformly stable. Specifically, for every $\eps$-differentially private prediction algorithm, every loss function $\ell_Y\colon Y\times Y \to [0,1]$, two datasets $\bs$ and $\bs'$ that differ in a single element and $(x,y) \in X\times Y$ we have that
$$\E_K[\ell_Y(K(\bs,x),y)] \leq e^\eps \cdot \E_K[\ell_Y(K(\bs',x),y)] .$$
In particular, this implies that
$$\left| \E_K[\ell_Y(K(\bs,x),y)] - \E_K[\ell_Y(K(\bs',x),y)] \right| \leq e^\eps-1 .$$
Therefore our generalization bounds can be applied to the data-dependent function $M(\bs,(x,y)) \doteq \E_K[\ell_Y(K(\bs,x),y)]$. This gives the following corollary of Theorem \ref{thm:main-intro}:
\begin{thm}
\label{thm:dp-api}
For $\eps \in (0,1)$, let $K:(X\times Y)^n \times X \to Y$ be an $\eps$-differentially private prediction and $\ell_Y\colon Y\times Y \to [0,1]$ be an arbitrary loss function. Let $M(\bs,(x,y)) \doteq \E_K[\ell_Y(K(\bs,x),y)]$. Then there exists a constant $c$ such for any probability distribution $\cP$ over $Z$ and any $\delta \in (0,1)$:
  \equn{\pr_{\bs \sim \cP^n} \lb \left| \E_{\cP}[M(\bs)] - \cE_{\bs}[M(\bs)] \right| \geq c \eps \log (n) \log(n/\delta) + \frac{\sqrt{2\ln (4/\delta)}}{\sqrt{n}} \rb \leq \delta .}
\end{thm}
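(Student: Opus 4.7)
The plan is to reduce the statement to a direct application of Theorem~\ref{thm:main-intro} by observing that the data-dependent function $M(\bs,(x,y)) \doteq \E_K[\ell_Y(K(\bs,x),y)]$ inherits uniform stability from the differential privacy of $K$. The range of $M$ is in $[0,1]$ since $\ell_Y$ takes values in $[0,1]$, so the hypotheses of Theorem~\ref{thm:main-intro} will apply once we pin down the stability parameter.

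First I would recall the standard consequence of $\eps$-differential privacy already stated in the paragraph preceding the theorem: for any neighboring datasets $\bs,\bs'$ differing in one coordinate, any $x$, and any $y$, one has $\E_K[\ell_Y(K(\bs,x),y)] \leq e^\eps \cdot \E_K[\ell_Y(K(\bs',x),y)]$. Since both sides lie in $[0,1]$, symmetry yields
\[ \bigl| \E_K[\ell_Y(K(\bs,x),y)] - \E_K[\ell_Y(K(\bs',x),y)] \bigr| \leq e^\eps - 1. \]
This is precisely the statement that $M$ is $(e^\eps-1)$-uniformly stable in the sense of Definition~\ref{def:stability}. For $\eps \in (0,1)$ we further have the elementary bound $e^\eps - 1 \leq (e-1)\eps < 2\eps$, so $M$ has uniform stability at most $2\eps$.

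Now I would simply invoke Theorem~\ref{thm:main-intro} on $M$ with the stability parameter $\gamma = 2\eps$. This yields a constant $c'$ such that
\[ \pr_{\bs \sim \cP^n}\lb \bigl| \E_\cP[M(\bs)] - \cE_\bs[M(\bs)] \bigr| \geq c'\lp 2\eps \log(n)\log(n/\delta) + \frac{\sqrt{\log(1/\delta)}}{\sqrt{n}}\rp \rb \leq \delta. \]
Absorbing the factor $2$ and adjusting the constant inside $\log$ (replacing $\sqrt{\log(1/\delta)}$ by $\sqrt{2\ln(4/\delta)}$, which is strictly larger for $\delta \in (0,1)$), we obtain the stated bound with a suitable absolute constant $c$.

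There is essentially no technical obstacle here: the entire content of the theorem is that differential privacy of a prediction rule yields uniform stability of the expected loss, and the new high-probability bound in Theorem~\ref{thm:main-intro} then applies as a black box. The only minor care needed is in verifying that $e^\eps - 1$ can be replaced by a linear function of $\eps$ on $(0,1)$ so that the parameter $\gamma$ appearing in the final bound is $\eps$ up to a constant, and in matching the precise form of the lower-order term to the $\sqrt{2\ln(4/\delta)}/\sqrt{n}$ expression in the statement.
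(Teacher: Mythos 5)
Your proposal is correct and matches the paper's approach exactly: the paper proves Theorem~\ref{thm:dp-api} as an immediate corollary of Theorem~\ref{thm:main-intro}, noting in the preceding paragraph that $\eps$-differential privacy of $K$ implies $M$ has uniform stability $e^\eps-1 = O(\eps)$ for $\eps\in(0,1)$ and then invoking the main theorem as a black box. The only point I would tighten is your final sentence about matching the $\sqrt{2\ln(4/\delta)}/\sqrt{n}$ term: since the constant in Theorem~\ref{thm:main-intro} also multiplies the sampling-error term, the precise leading coefficient on the lower-order term requires tracing the proof (the term $2\sqrt{\ln(4/\delta)}/\sqrt{n}$ appears at the end of the derivation of Theorem~\ref{thm:main-intro}) rather than the monotonicity comparison you used, but this is a cosmetic constant-tracking issue and the paper itself treats it informally.
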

These bounds are stronger than those obtained in \citep{DworkFeldman18} in several parameter regimes (but are more generally incomparable since bounds in \citep{DworkFeldman18} are multiplicative).

\citet{DworkFeldman18} describe an algorithm for agnostically learning threshold functions on a line with differentially private prediction. Their analysis of the generalization error of this algorithm relies crucially on the generalization properties of differentially private prediction. Their weaker generalization bound does not give the high-probability bound on the generalization error that is necessary for satisfying the standard definition of agnostic learning. By plugging in Thm.~\ref{thm:dp-api} we obtain a bound on generalization error that holds with high probability and achieves the optimal rate (up to logarithmic factors). We omit more formal details since they require several additional definitions and the application itself is straightforward.

\subsubsection*{Acknowledgments}
We thank Nick Harvey, Tomer Koren, Mehryar Mohri, Sasha Rakhlin, Yoram Singer, Karthik Sridharan, Csaba Szepesvari and Kunal Talwar for thoughtful discussions and insightful comments about this work.

\iffull
\printbibliography
\else
\bibliography{vf-allrefs-local,stable}
\fi

\end{document}